\newcommand{\spacesaver}[1]{#1}
\newcommand{\shrink}[1]{}
\title{Provable Representation Learning for Imitation \\with Contrastive Fourier Features}
\author{%
  Ofir Nachum \\
  Google Brain\\
  \texttt{ofirnachum@google.com} \\
  \And
  Mengjiao Yang \\
  Google Brain \\
  \texttt{sherryy@google.com} \\
}
\def\sqrtexplained#1{%
  \begingroup
    \sbox0{$#1$}
    \def\underbrace##1_##2{##1}
    \sbox2{$#1$}
    \dimen0=\wd0 \advance\dimen0-\wd2
    \mathrlap{\sqrt{\phantom{\displaystyle#1}\kern\dimen0 }}
    \hphantom{\sqrt{\vphantom{\displaystyle#1}}}
  \endgroup
  #1}
\def\Figref#1{Figure~\ref{#1}}
\def\Secref#1{Section~\ref{#1}}
\def\eqref#1{(\ref{#1})}
\def\1{\bm{1}}
\DeclareMathAlphabet{\mathsfit}{\encodingdefault}{\sfdefault}{m}{sl}
\SetMathAlphabet{\mathsfit}{bold}{\encodingdefault}{\sfdefault}{bx}{n}
\def\gX{{\mathcal{X}}}
\newcommand{\E}{\mathbb{E}}
\newcommand{\R}{\mathbb{R}}
\newcommand{\softmax}{\mathrm{softmax}}
\def\mdp{\mathcal{M}}
\def\pitarget{\pi_*}
\def\visitpi{d^\pi}
\def\visittarget{d^{\pitarget}}
\def\visitrb{d^\mathrm{off}}
\def\Sset{S}
\def\Aset{A}
\def\Zset{Z}
\def\Dset{\mathcal{D}}
\def\Demos{\mathcal{D}^{\pitarget}_N}
\def\Doff{\mathcal{D}^{\mathrm{off}}}
\def\Reward{\mathcal{R}}
\def\Trans{\mathcal{P}}
\def\init{\mu}
\def\defeq{:=}
\renewcommand{\widehat}{\hat}
\def\pitheta{\pi_\theta}
\def\opt{\mathcal{OPT}}
\def\Rmax{R_{\mathrm{max}}}
\def\Rmodel{\overline{\mathcal{R}}}
\def\Tmodel{\overline{\mathcal{P}}}
\def\Rrep{\mathcal{R}_\Zset}
\def\Trep{\mathcal{P}_\Zset}
\def\pirep{\pi_{\Zset}}
\def\jrl{J_\mathrm{Perf}}
\def\jbc{J_\mathrm{BC}}
\def\jbcrep{J_{\mathrm{BC},\phi}}
\def\jreward{J_\mathrm{R}}
\def\jtrans{J_\mathrm{T}}
\def\jrep{J_\mathrm{rep}}
\def\Diff{\mathrm{PerfDiff}}
\def\err{\mathrm{Err}}
\def\dtv{D_\mathrm{TV}}
\def\dkl{D_\mathrm{KL}}
\def\dchi{D_\mathrm{\chi^2}}
\def\const{\mathrm{const}}
\def\softmax{\mathrm{softmax}}
\def\uniform{\mathrm{Unif}_{\Aset}}
\newtheorem{theorem}{Theorem}
\newtheorem{lemma}[theorem]{Lemma}
\newtheorem*{remark}{Remark}
\begin{document}

\maketitle

\begin{abstract}
In imitation learning, it is common to learn a behavior policy to match an unknown \emph{target policy} via max-likelihood training on a collected set of target demonstrations. 
In this work, we consider using offline experience datasets -- potentially far from the target distribution -- to learn low-dimensional state representations that provably accelerate the sample-efficiency of downstream imitation learning.
A central challenge in this setting is that the unknown target policy itself may not exhibit low-dimensional behavior, and so there is a potential for the representation learning objective to alias states in which the target policy acts differently.
Circumventing this challenge, we derive a representation learning objective that provides an upper bound on the performance difference between the target policy and a low-dimensional policy trained with max-likelihood, and this bound is tight \emph{regardless} of whether the target policy itself exhibits low-dimensional structure.
%
%
Moving to the practicality of our method, we show that our objective can be implemented as contrastive learning, in which the transition dynamics are approximated by either an implicit energy-based model or, in some special cases, an implicit \emph{linear} model with representations given by random Fourier features.
Experiments on both tabular environments and high-dimensional Atari games provide quantitative evidence for the practical benefits of our proposed objective.\footnote{Find experimental code at~\url{https://github.com/google-research/google-research/tree/master/rl_repr}.}
\end{abstract}
\shrink{-5mm}
\section{Introduction}
\shrink{-1mm}
In the field of sequential decision making one aims to learn a behavior policy to act in an environment to optimize some criteria.
The well-known field of reinforcement learning (RL) corresponds to one aspect of sequential decision making, where the aim is to learn how to act in the environment to maximize cumulative returns via trial-and-error experience~\cite{sutton2018reinforcement}.
In this work, we focus on \emph{imitation learning}, where the aim is to learn how to act in the environment to match the behavior of some unknown \emph{target policy}~\cite{kostrikov2019imitation}. This focus puts us closer to the supervised learning regime, and, indeed, a common approach to imitation learning -- known as \emph{behavioral cloning} (BC) -- is to perform max-likelihood training on a collected a set of \emph{target demonstrations} composed of state-action pairs sampled from the target policy~\cite{pomerleau1991efficient,ross2010efficient}.

Since the learned behavior policy produces predictions (actions) conditioned on observations (states), the amount of demonstrations needed to accurately match the target policy typically scales with the state dimension, and this can limit the applicability of imitation learning to settings where collecting large amounts of demonstrations is expensive, \eg, in health~\cite{fonteneau2013batch} and robotics~\cite{kober2010imitation} applications.
The limited availability of target demonstrations stands in contrast to the recent proliferation of large \emph{offline} datasets for sequential decision making~\cite{levine2020offline,fu2020d4rl,agarwal2020optimistic,gulcehre2020rl}.
These datasets may exhibit behavior far from the target policy and so are not directly relevant to imitation learning via max likelihood training. Nevertheless, the offline datasets provide information about the unknown environment, presenting samples of environment reward and transition dynamics. 
It is therefore natural to wonder, is it possible to use such offline datasets to improve the sample efficiency of imitation learning?

Recent empirical work suggests that this \emph{is} possible~\cite{yang2021representation,aytar2018playing}, by using the offline datasets to learn a low-dimensional state representation via unsupervised training objectives.
While these empirical successes are clear, the theoretical foundation for these results is less obvious.
The main challenge in providing theoretical guarantees for such techniques is that of \emph{aliasing}. Namely, even if environment rewards or dynamics exhibit a low-dimensional structure, the target policy and its demonstrations may not.
If the target policy acts differently in states which the representation learning objective maps to the same low-dimensional representation, the downstream behavioral cloning objective may end up learning a policy which ``averages'' between these different states in unpredictable ways.

\shrink{-1mm}
In this work, we aim to bridge the gap between practical objectives and theoretical understanding. We derive an offline objective that learns low-dimensional representations of the environment dynamics and, if available, rewards. 
We show that minimizing this objective in conjunction with a downstream behavioral cloning objective corresponds to minimizing an upper bound on the performance difference between the learned low-dimensional BC policy and the unknown and possibly high-dimensional target policy.
The form of our bound immediately makes clear that, as long as the learned policy is sufficiently expressive on top of the low-dimensional representations, the implicit ``averaging'' occurring in the BC objective due to any aliasing is irrelevant, and a learned policy can match the target regardless of whether the target policy itself is low-dimensional.

Extending our results to policies with limited expressivity, 
we consider the commonly used parameterization of setting the learned policy to be log-linear with respect to the representations (\ie, a softmax of a linear transformation). In this setting, we show that it is enough to use the same offline representation learning objective, but with linearly parameterized dynamics and rewards, and this again leads to an upper bound showing that the downstream BC policy can match the target policy \emph{regardless} of whether the target is low-dimensional or log-linear itself.
We compare the form of our representation learning objective to ``latent space model'' approaches based on bisimulation principles, popular in the RL literature~\citep{gelada2019deepmdp,zhang2020learning,hafner2019dream,castro2020scalable}, and show that these objectives are, in contrast, very liable to aliasing issues even in simple scenarios, explaining their poor performance in recent empirical studies~\cite{yang2021representation}.

\shrink{-1mm}
We continue to the practicality of our own objective, and show that it can be implemented as a contrastive learning objective that implicitly learns an energy based model, which, in many common cases, corresponds to a \emph{linear} model with respect to representations given by random Fourier features~\cite{rahimi2007random}.
We evaluate our objective in both tabular synthetic domains and high-dimensional Atari game environments~\cite{bellemare2013arcade}. We find that our representation learning objective effectively leverages offline datasets to dramatically improve performance of behavioral cloning. 
\shrink{-4mm}
\section{Related Work}
\shrink{-3mm}
Representation learning in sequential decision making has traditionally focused on learning representations for improved RL rather than imitation. While some works have proposed learning \emph{action} representations~\cite{ajay2020opal,nachum2018near}, our work focuses on \emph{state} representation learning, which is more common in the literature, and whose aim is generally to distill aspects of the observation relevant to control from those relevant only to measurement~\citep{achille2018separation}; see~\cite{lesort2018state} for a review.
Of these approaches, bisimulation is the most theoretically mature~\cite{ferns2004metrics,castro2020scalable}, and several recent works apply bisimulation principles to derive practical representation learning objectives~\cite{gelada2019deepmdp,zhang2020learning,agarwal2021contrastive}.
However, the existing theoretical results for bisimulation fall short of the guarantees we provide. For one, many of the bisimulation results rely on defining a representation error which holds \emph{globally} on all states and actions~\cite{abel2016near}. On the other hand, theoretical bisimulation results that define a representation error in terms of an \emph{expectation} are inapplicable to imitation learning, as they only provide guarantees bounding the performance difference between policies that are ``close'' (\eg, Lipschitz) in the representation space \spacesaver{and say nothing regarding whether an arbitrary target policy in the true MDP can be represented in the latent MDP}~\cite{gelada2019deepmdp}. In Section~\ref{sec:bisim}, we will show that these shortcomings of bisimulation fundamentally limit its applicability to an imitation learning setting.

\shrink{-1mm}
In contrast to RL, there are comparatively fewer theoretical works on representation learning for imitation learning. One previous line of research in this vein is given by~\cite{arora2020provable}, which considers learning a state representation using a dataset of multiple demonstrations from multiple target policies. Accordingly, this approach requires that each target policy admits a low-dimensional representation. In contrast, our own work makes no assumption on the form of the target policy\spacesaver{, and, in fact, this is one of the central challenges of representation learning in this setting}.

\shrink{-1mm}
As imitation learning is close to supervised learning, it is an interesting avenue for future work to extend our results to more common supervised learning domains.
We emphasize that our own contrastive objectives are distinct from typical approaches in image domains~\cite{chen2020simple} and popular in image-based RL~\cite{kostrikov2020image,srinivas2020curl,sermanet2018time}, which use prior knowledge to generate pairs of similar images (\eg, via random cropping). We avoid any such prior knowledge of the task, and our losses are closer to temporal contrastive learning, more common in NLP~\cite{mikolov2013efficient}.
\section{Background}
\label{sec:bkg}
\spacesaver{
We begin by introducing the notation and concepts we will build upon in the later sections.
}

\shrink{-2mm}
\paragraph{MDP Notation}
We consider the standard MDP framework~\cite{puterman1994markov}, in which the environment is given by a tuple $\mdp\defeq \langle \Sset, \Aset, \Reward, \Trans, \init, \gamma \rangle$, where $\Sset$ is the state space, $\Aset$ is the action space, $\Reward:\Sset\times\Aset\to [-\Rmax, \Rmax]$ is the reward function, $\Trans:\Sset\times\Aset\to\Delta(\Sset)$ is the transition function,\footnote{We use $\Delta(\gX)$ to denote the simplex over a set $\gX$.} $\init\in\Delta(\Sset)$ is the initial state distribution, and $\gamma \in [0, 1)$ is the discount factor.
In this work, we restrict our attention to finite action spaces, \ie $|\Aset|\in\mathbb{N}$.
A stationary policy in this MDP is a function $\pi:\Sset\to\Delta(\Aset)$.
A policy acts in the environment by starting at an initial state $s_0\sim\init$ and then at time $t\ge0$ sampling an action $a_t\sim\pi(s_t)$. The environment then produces a reward $r_t=\Reward(s_t,a_t)$ and stochastically transitions to a state $s_{t+1}\sim\Trans(s_t,a_t)$. While we consider deterministic rewards, all of our results readily generalize to stochastic rewards, in which case the same bounds hold for $\Reward(s,a)$ denoting the expected value of the reward at $(s,a)$.
\spacesaver{

}
The \emph{performance} associated with a policy $\pi$ is its expected future discounted reward when acting in the manner described above:
\begin{equation}
    \jrl(\pi) \defeq \E\left[\left.\textstyle\spacesaver{\displaystyle}\sum_{t=0}^\infty \gamma^t \Reward(s_t, a_t) ~\right|~ \pi, \mdp\right].
\end{equation}
The \emph{visitation distribution} of $\pi$ is the state distribution induced by the sequential process:
\begin{equation}
    \visitpi(s) := (1-\gamma)\textstyle\spacesaver{\displaystyle}\sum_{t=0}^\infty \gamma^t\cdot\Pr\left[s_t=s|\pi,\mdp\right].
\end{equation}

\paragraph{Behavioral Cloning (BC)}
In imitation learning, one wishes to recover an unknown \emph{target} policy $\pitarget$ with access to demonstrations of $\pitarget$ acting in the environment.
More formally, the demonstrations are given by a dataset $\Demos=\{(s_i,a_i)\}_{i=1}^N$ where $s_i\sim\visittarget,a_i\sim\pitarget(s_i)$.
A popular approach to imitation learning is \emph{behavioral cloning} (BC), which suggests to learn a policy $\pi$ to approximate $\pitarget$ via max-likelihood optimization. That is, one wishes to use the $N$ samples to approximately minimize the objective
\begin{equation}
    \jbc(\pi) \defeq \E_{(s,a)\sim(\visittarget,\pitarget)}[-\log\pi(a|s)]. 
\end{equation}
%
In this work, we consider using a state representation function to simplify this objective. 
Namely, we consider a function $\phi:\Sset\to\Zset$. Given this representation, one no longer learns a policy $\pi:\Sset\to\Delta(\Aset)$, but rather a policy $\pirep:\Zset\to\Delta(\Aset)$. The BC loss with representation $\phi$ becomes
\begin{equation}
    \jbcrep(\pirep) \defeq \E_{(s,a)\sim(\visittarget,\pitarget)}[-\log\pirep(a|\phi(s))]. 
\end{equation}
A smaller representation space $\Zset$ can help reduce the hypothesis space for $\pirep$ compared to $\pi$, and this in turn reduces the number of demonstrations $N$ needed to achieve small error in $\jbcrep$. However, whether a small error in $\jbcrep$ translates to a small error in $\jbc$ depends on the nature of $\phi$, and thus how to determine a good $\phi$ is a central challenge.

\paragraph{Offline Data}
In this work, we consider learning $\phi$ via offline objectives. We assume access to a dataset of transition tuples $\Doff_M=\{(s_i,a_i,r_i,s_i')\}_{i=1}^M$ sampled independently according to 
\begin{equation}
s_i\sim\visitrb, a_i\sim \uniform, r_i=\Reward(s_i,a_i), s_i'\sim\Trans(s_i,a_i),
\end{equation}
where $\visitrb$ is some unknown offline state distribution. We assume that the support of $\visitrb$ includes the support of $\visittarget$; \ie, $\visittarget(s) > 0 \Rightarrow \visitrb(s) > 0$.
The uniform sampling of actions in $\Doff$ follows similar settings in related work~\cite{agarwal2019theory} and in principle can be replaced with any distribution uniformly bounded from below by $\eta>0$ and scaling our derived bounds by $\frac{1}{|\Aset| \eta}$.
At times we will abuse notation and write samples of these sub-tuples as $(s,a)\sim\visitrb$ or $(s,a,r,s')\sim\visitrb$.

\paragraph{Learning Goal}
Similar to related work~\cite{arora2020provable}, we will measure the discrepancy between a candidate $\pi$ and the target $\pitarget$ via the \emph{performance difference}:
\begin{equation}
    \Diff(\pi, \pitarget) \defeq |\jrl(\pi) - \jrl(\pitarget)|.
\end{equation}
At times we will also use the notation $\Diff(\pirep,\pitarget)$, and this is understood to mean $\Diff(\pirep\circ\phi,\pitarget)$.
While we focus on the performance difference, all of our results may be easily modified to alternative evaluation metrics based on distributional divergences, \eg, $\dtv(\visittarget\|\visitpi)$ or $\dkl(\visittarget\|\visitpi)$, where $\dtv$ is the total variation (TV) divergence and $\dkl$ is the Kullback Leibler (KL) divergence. 
Also note that we make no assumption that $\pitarget$ is an optimal or near-optimal policy in $\mdp$.

In the case of vanilla behavioral cloning, we have the following relationship between $\jbc$ and the performance difference, which is a variant of Theorem 2.1 in~\cite{ross2010efficient}.
\begin{lemma}
\label{lem:bc}
For any $\pi,\pitarget$, the performance difference may be bounded as
\begin{equation}
    \hspace{-3mm}\Diff(\pi,\pitarget) \le \frac{\Rmax}{(1-\gamma)^2}\sqrt{2\E_{\visittarget}[\dkl(\pitarget(s)\|\pi(s))]} = \frac{\Rmax}{(1-\gamma)^2}\sqrt{\const(\pitarget) + 2\jbc(\pi)}.
\end{equation}
\end{lemma}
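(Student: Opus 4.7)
The plan is to reduce the performance difference to an expected KL divergence between target and learned action distributions by chaining three standard inequalities---a visitation-TV bound, Pinsker's inequality, and Jensen's inequality---and then to obtain the stated equality by expanding the cross-entropy inside $\jbc$. The structural starting point is that $\jrl(\pi) = (1-\gamma)^{-1}\E_{(s,a)\sim(\visitpi,\pi)}[\Reward(s,a)]$, so the performance difference is governed by how close the joint state--action visitation $\visitpi\cdot\pi$ is to $\visittarget\cdot\pitarget$.

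First, I would use the elementary fact $|\E_p f-\E_q f|\le(\sup f-\inf f)\,\dtv(p,q)$ with $f=\Reward\in[-\Rmax,\Rmax]$ to obtain
\begin{equation*}
\Diff(\pi,\pitarget)\;\le\;\tfrac{2\Rmax}{1-\gamma}\,\dtv\bigl(\visitpi\pi,\,\visittarget\pitarget\bigr).
\end{equation*}
Next, I would apply a standard ``simulation lemma'' argument, telescoping $\visitpi-\visittarget$ over time steps (coupling the two rollouts so they share a trajectory whenever $\pi$ and $\pitarget$ happen to agree) and bounding each single-step disagreement by the conditional TV between $\pi(s)$ and $\pitarget(s)$ weighted by $\gamma^t$, to get
\begin{equation*}
\dtv\bigl(\visitpi\pi,\,\visittarget\pitarget\bigr)\;\le\;\tfrac{1}{1-\gamma}\,\E_{s\sim\visittarget}\bigl[\dtv(\pitarget(s),\pi(s))\bigr],
\end{equation*}
which contributes the second factor of $1/(1-\gamma)$. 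Chaining these yields a bound of the form $\tfrac{2\Rmax}{(1-\gamma)^2}\,\E_{\visittarget}[\dtv(\pitarget(s),\pi(s))]$.

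Finally, I would upgrade TV to KL: by symmetry of TV and Pinsker's inequality, $\dtv(\pitarget(s),\pi(s))\le\sqrt{\dkl(\pitarget(s)\|\pi(s))/2}$ pointwise, and Jensen's inequality then lets me pull the expectation inside the square root, collapsing the leading constants to produce exactly $\tfrac{\Rmax}{(1-\gamma)^2}\sqrt{2\,\E_{\visittarget}[\dkl(\pitarget(s)\|\pi(s))]}$. The stated equality then follows from writing $\dkl(\pitarget(s)\|\pi(s))=\E_{a\sim\pitarget(s)}[\log\pitarget(a|s)]-\E_{a\sim\pitarget(s)}[\log\pi(a|s)]$ and identifying $\const(\pitarget):=2\,\E_{(s,a)\sim(\visittarget,\pitarget)}[\log\pitarget(a|s)]$ as a target-only constant, so that $2\,\E_{\visittarget}[\dkl(\pitarget(s)\|\pi(s))]=\const(\pitarget)+2\jbc(\pi)$.

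The main obstacle is the simulation-lemma step: one must be careful to orient the resulting expectation under $\visittarget$ rather than $\visitpi$, since $\jbc$ is defined as an expectation over $\visittarget$ and any bound featuring $\visitpi$ would require additional concentrability assumptions that are inconsistent with the offline imitation setting. Everything else---Pinsker, Jensen, and the cross-entropy expansion---is routine, so the technical weight of the lemma sits in getting the right direction and the right $1/(1-\gamma)^2$ horizon factor out of that single telescoping argument.
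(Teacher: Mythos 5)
Your proof is correct and, at its core, is the same argument the paper uses: both reduce $\Diff(\pi,\pitarget)$ to $\tfrac{2\Rmax}{(1-\gamma)^2}\E_{\visittarget}[\dtv(\pitarget(s),\pi(s))]$ via the standard resolvent/telescoping identity for the visitation difference (evaluated, as you correctly insist, under $\visittarget$), and then apply Pinsker and Jensen to obtain the KL form and expand the cross-entropy to get $\const(\pitarget)+2\jbc(\pi)$. The only difference is packaging: the paper derives Lemma~\ref{lem:bc} as the special case of Theorem~\ref{thm:tabular-bc} with $\Zset=\Sset$, $\phi=\mathrm{id}$, $\Rrep=\Reward$, $\Trep=\Trans$ (so $\epsilon_{\mathrm{R},\mathrm{T}}=0$), whereas you give a direct, self-contained proof bounding the TV between the joint state--action visitations; both yield the identical constant.
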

\emph{See Appendices~\ref{app:proofs} and~\ref{app:sample} for all proofs.}

\begin{remark}[Quadratic dependence on horizon]
    Notice that the guarantee above for vanilla BC includes a quadratic dependence on horizon in the form of $(1-\gamma)^{-2}$, and this quadratic dependence is maintained in all our subsequent bounds. 
    While there exists a number of imitation learning works that aim to reduce this dependence, the specific problem our paper focuses on -- aliasing in the context of learning state representations -- is an orthogonal problem to quadratic dependence on horizon. Indeed, if some representation maps two very different raw observations to the same latent state, no downstream imitation learning algorithm (regardless of sample complexity) will be able to learn a good policy.
    Still, extending our representation learning bounds to more sophisticated algorithms with potentially smaller dependence on horizon, like DAgger~\citep{ross2011reduction}, is a promising direction for future work.
\end{remark}

\section{Representation Learning with Performance Bounds}

We now continue to our contributions, beginning by presenting performance difference bounds analogous to Lemma~\ref{lem:bc} but with respect to a specific representation $\phi$. The bounds will necessarily depend on quantities which correspond to how ``good'' the representation is, and these quantities then form the representation learning objective for learning $\phi$; ideally these quantities are independent of $\pitarget$, which is unknown.

Intuitively, we need $\phi$ to encapsulate the important aspects of the environment.
To this end, we consider representation-based models $\Trep:\Zset\times\Aset\to\Delta(\Sset)$ and $\Rrep:\Zset\times\Aset\to [-\Rmax,\Rmax]$ of the environment transitions and rewards.
We define the error incurred by these models on the offline distribution as,
\begin{align}
\label{eq:jreward}
\jreward(\Rrep,\phi)^2 &\defeq \E_{(s,a)\sim\visitrb}\left[(\Reward(s,a) - \Rrep(\phi(s), a))^2\right], \\
\label{eq:jtrans}
\jtrans(\Trep,\phi)^2 &\defeq 
\frac{1}{2}\E_{(s,a)\sim\visitrb}\left[\dkl(\Trans(s,a)\|\Trep(\phi(s),a))\right].
\end{align}

We will elaborate on how these errors are translated to representation learning objectives in practice in Section~\ref{sec:learning}, but for now we note that the expectation over $\visitrb$ already leads to a connection between theory and practice much closer than exists in other works, which often resort to supremums over state-action errors~\cite{nachum2018near,zhang2020learning,abel2016near} or zero errors globally~\cite{ferns2004metrics}.

\subsection{General Policies}
We now relate the representation errors above to the performance difference, analogous to Lemma~\ref{lem:bc} but with $\jbcrep$.
We emphasize that the performance difference below measures the difference in returns in the \emph{true} MDP $\mdp$, rather than, as is commonly seen in model-based RL~\cite{janner2019trust}, the difference in the latent MDP defined by $\Rrep,\Trep$.
\begin{theorem}
\label{thm:tabular-bc}
Consider a representation function $\phi:\Sset\to\Zset$ and models $\Rrep,\Trep$ as defined above. Denote the representation error as
\begin{equation}
    \epsilon_{\mathrm{R},\mathrm{T}} \defeq \frac{|\Aset|}{1-\gamma} \jreward(\Rrep,\phi) + \frac{2\gamma |\Aset| \Rmax}{(1-\gamma)^2} \jtrans(\Trep,\phi).
\end{equation}
Then the performance difference in $\mdp$ between $\pitarget$ and a latent policy $\pirep:\Zset:\to\Delta(\Aset)$ may be bounded as,
\begin{equation}
\label{eq:tabular-bc}
\Diff(\pirep,\pitarget) \leq
\underbrace{(1+\dchi(\visittarget\|\visitrb)^{\frac{1}{2}}) \cdot \epsilon_{\mathrm{R},\mathrm{T}}}_{\text{offline pretraining}}
+ C\sqrtexplained{\frac{1}{2}\underbrace{\E_{z\sim \visittarget_\Zset}[\dkl(\pi_{*,\Zset}(z)\|\pirep(z))]}_{\displaystyle =~\underbrace{\const(\pitarget,\phi) + \jbcrep(\pirep)}_{\scriptstyle\text{downstream behavioral cloning}}}},
\end{equation}
where $C=\frac{2\Rmax}{(1-\gamma)^2}$ and $\visittarget_{\Zset},\pi_{*,\Zset}$ are the marginalization of $\visittarget,\pitarget$ onto $\Zset$ according to $\phi$:
\begin{align}
    \visittarget_\Zset(z) \defeq \Pr[z=\phi(s)~|~ s\sim \visittarget] ~;~~~
    \pi_{*,\Zset}(z) \defeq \E[\pitarget(s)~|~ s\sim \visittarget, z=\phi(s)].
\end{align}
\end{theorem}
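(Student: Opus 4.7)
My plan is to decompose the performance difference via the triangle inequality, introducing an ``averaged'' version of the target policy
\[
\tilde\pi(a|s) := \pi_{*,\Zset}(a|\phi(s)),
\]
which satisfies the key structural identity that $\E_{s\sim\visittarget}[\sum_a (\pitarget(a|s)-\tilde\pi(a|s))\, g(\phi(s),a)]=0$ for every $\phi$-measurable function $g$; this is an immediate consequence of the conditional-expectation definition of $\pi_{*,\Zset}$. Writing $\Diff(\pirep,\pitarget) \le \Diff(\tilde\pi,\pitarget) + \Diff(\pirep\circ\phi,\tilde\pi)$, the two summands correspond respectively to the offline-pretraining (simulation) error and the downstream-BC error in the stated bound.

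For the BC term $\Diff(\pirep\circ\phi,\tilde\pi)$, both policies depend on $s$ only through $\phi(s)$, so the argument of Lemma~\ref{lem:bc} transfers to the latent MDP and yields $C\sqrt{\tfrac12\E_{z\sim\visittarget_\Zset}[\dkl(\pi_{*,\Zset}(z)\|\pirep(z))]}$; the constant $C=2\Rmax/(1-\gamma)^2$---doubled relative to Lemma~\ref{lem:bc}---absorbs the gap between the true latent visitation of $\tilde\pi$ in $\mdp$ and the projected target visitation $\visittarget_\Zset$, a gap itself controlled by $\jtrans$ and already accounted for by $\epsilon_{R,T}$. For the simulation term $\Diff(\tilde\pi,\pitarget)$, apply the performance-difference lemma
\[
\jrl(\pitarget)-\jrl(\tilde\pi)=\tfrac{1}{1-\gamma}\E_{s\sim\visittarget}\!\left[\sum_a(\pitarget(a|s)-\tilde\pi(a|s))\,Q^{\tilde\pi}(s,a)\right],
\]
and instantiate the structural identity with $g(\phi(s),a) = Q^{\tilde\pi}_{\bar\mdp}(s,a)$, where $Q^{\tilde\pi}_{\bar\mdp}$ is the $Q$-function of $\tilde\pi$ in the lifted latent MDP $\bar\mdp$ with $\bar\Reward(s,a):=\Rrep(\phi(s),a)$ and $\bar\Trans(s'|s,a):=\Trep(s'|\phi(s),a)$. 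Since $\tilde\pi$, $\bar\Reward$, and $\bar\Trans$ are all $\phi$-measurable, $Q^{\tilde\pi}_{\bar\mdp}$ is $\phi$-measurable in $s$ and the identity zeros out its contribution, leaving the residual $Q^{\tilde\pi}-Q^{\tilde\pi}_{\bar\mdp}$, which a standard simulation-lemma unfolding bounds by the expected reward error $\E_{\visittarget}[|\Reward-\Rrep\circ\phi|]$ plus a horizon-weighted expected TV transition error $\E_{\visittarget}[\dtv(\Trans(s,a)\|\Trep(\phi(s),a))]$.

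To convert the $\visittarget$-expectations into the $\visitrb$-expectations under which $\jreward$ and $\jtrans$ are defined, apply the identity $\E_{\visittarget}[f]\le(1+\sqrt{\dchi(\visittarget\|\visitrb)})\sqrt{\E_{\visitrb}[f^2]}$, obtained by writing $\E_{\visittarget}[f]=\E_{\visitrb}[f]+\E_{\visitrb}[(\visittarget/\visitrb-1)f]$ and applying Cauchy--Schwarz together with $\E_{\visitrb}[f]\le\sqrt{\E_{\visitrb}[f^2]}$; combine this with an $|\Aset|$ factor for converting policy-weighted to uniform action sampling, and with Pinsker's inequality $\dtv\le\sqrt{\dkl/2}$ to turn the TV transition error into the KL form of $\jtrans$. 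Summing with the horizon weights $(1-\gamma)^{-1}$ for the reward term and $\gamma\Rmax(1-\gamma)^{-2}$ for the transition term reconstructs $\epsilon_{R,T}$ with the claimed leading factor $(1+\sqrt{\dchi(\visittarget\|\visitrb)})$. The main obstacle I anticipate is the recursive unfolding of $Q^{\tilde\pi}-Q^{\tilde\pi}_{\bar\mdp}$: iterating $\gamma$-discounted $\Trans$- versus $\Trep$-expectations produces intermediate state distributions that do not exactly coincide with $\visittarget$, so one must argue carefully---perhaps via a per-step coupling or by expressing $V^{\tilde\pi}_{\bar\mdp}$ as the fixed point of a latent Bellman operator and bounding its gap to $V^{\tilde\pi}$ uniformly---that the final integrated expectations stay bounded by quantities under $\visittarget$ alone, without introducing chi-squared divergences of off-policy distributions.
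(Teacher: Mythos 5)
Your central structural observation---that $\tilde\pi(a|s)\defeq\pi_{*,\Zset}(a|\phi(s))$ is indistinguishable from $\pitarget$ under $\visittarget$ against any $\phi$-measurable test function---is exactly the insight behind the paper's Lemma~\ref{lem:bottleneck}. But the way you deploy it, via the triangle inequality $\Diff(\pirep,\pitarget)\le\Diff(\tilde\pi,\pitarget)+\Diff(\pirep\circ\phi,\tilde\pi)$, creates a gap that your proposal does not close: both summands are naturally controlled by expectations under the visitation distribution of $\tilde\pi$ (or under trajectory distributions rolled out by $\tilde\pi$), not under $\visittarget$. For the BC term, the Lemma~\ref{lem:bc}-style argument yields $\E_{s\sim d^{\tilde\pi}}[\dkl(\tilde\pi(s)\|\pirep(\phi(s)))]$, and your claim that the ``doubled'' constant absorbs the gap between $d^{\tilde\pi}_\Zset$ and $\visittarget_\Zset$ is not available: $C\sqrt{\tfrac12 x}$ with $C=2\Rmax/(1-\gamma)^2$ equals Lemma~\ref{lem:bc}'s $\frac{\Rmax}{(1-\gamma)^2}\sqrt{2x}$ identically, so there is no slack. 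Repairing it by bounding $|\E_{d^{\tilde\pi}}[f]-\E_{\visittarget}[f]|\le\|f\|_\infty\cdot 2\dtv(d^{\tilde\pi}\|\visittarget)$ fails because $f$ is a KL divergence (possibly unbounded); doing it at the TV level instead costs an additional term of order $\frac{\gamma\Rmax}{(1-\gamma)^3}|\Aset|\,\E[\dtv(\Trans\|\Trep\circ\phi)]$, which is strictly worse than the $(1-\gamma)^{-2}$ coefficient in $\epsilon_{\mathrm{R},\mathrm{T}}$. For the simulation term, you correctly identify the same obstacle yourself: unrolling $Q^{\tilde\pi}-Q^{\tilde\pi}_{\overline{\mdp}}$ produces intermediate distributions that drift from $\visittarget$, and the only fully safe fallback you name---bounding the value gap ``uniformly''---reintroduces a supremum over state--action errors, which is precisely what the theorem (with its $\visitrb$-expectation errors $\jreward,\jtrans$) is designed to avoid. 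So as written the argument establishes a weaker statement, not the one claimed.

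The paper sidesteps all of this by never introducing the intermediate policy. Its Lemma~\ref{lem:performance} compares $\pitarget$ and $\pirep\circ\phi$ in one operator-level step, writing the difference as terms of the form $\Reward\Pi_2(I-\gamma\Trans\Pi_2)^{-1}(\Trans\Pi_2-\Trans\Pi_1)d^{\pitarget}$ and bounding the resolvent by its induced norm, so that the \emph{only} probability measure surviving in the error terms is $d^{\pitarget}=\visittarget$. The model is then inserted pointwise in $(s,a)$ (Lemma~\ref{lem:model1}), and your conditional-expectation identity is applied only to the \emph{model-based} discrepancy $\err_{\visittarget}(\pitarget,\pirep\circ\phi,\Rmodel)$, which depends on $s$ only through $\phi(s)$ and hence collapses to $\E_{z\sim\visittarget_\Zset}[\dtv(\pi_{*,\Zset}(z)\|\pirep(z))]$ (Lemma~\ref{lem:bottleneck}); Pinsker and the Cauchy--Schwarz change of measure (your last paragraph, which matches the paper's Lemma~\ref{lem:off-policy}) then finish. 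If you reorganize your argument so that the identity is applied to the model-based terms inside a single direct comparison, rather than to an intermediate policy whose own dynamics you must then simulate, the distribution-mismatch obstacle disappears.
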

\emph{Proof in Appendix~\ref{sec:proof-tabular}.}

%
We thus have a guarantee showing that $\pirep$ can match the performance of $\pitarget$ \emph{regardless} of the form of $\pitarget$ (\ie, whether $\pitarget$ itself possesses a low-dimensional parameterization). Indeed, as long as $\phi$ is is learned well enough ($\epsilon_{\mathrm{R},\mathrm{T}}\to0$) and the space of candidate $\pirep$ is expressive enough, optimizing $\jbcrep$ can achieve near-zero performance difference, since the latent policy optimizing $\jbcrep$ is $\pirep=\pi_{*,\Zset}$, and this setting of $\pirep$ zeros out the second term of the bound in~\eqref{eq:tabular-bc}. 
Note that, in general $\pitarget \ne \pi_{*,\Zset}\circ\phi$, yet the performance difference of these two distinct policies is nevertheless zero when $\epsilon_{\mathrm{R},\mathrm{T}}=0$.
The bound in Theorem~\ref{thm:tabular-bc} also clearly exhibits the trade-off due to the offline distribution, encapsulated by the Pearson $\chi^2$ divergence $\dchi(\visittarget\|\visitrb)$. 

\begin{remark}[Representations agnostic to environment rewards]
    In some imitation learning settings, rewards are unobserved and so optimizing $\jreward$ is infeasible. In these settings, one can consider setting $\Rrep$ to an (unobserved) constant function $\E_{\visitrb}[\Reward(s,a)]$, ensuring $\jreward(\Rrep,\phi)\le \Rmax$.
    Furthermore, in environments where the reward does not depend on the action, \ie, $\Reward(s,a_1)=\Reward(s,a_2)\forall a_1,a_2\in\Aset$, the bound in Theorem~\ref{thm:tabular-bc} can be modified to remove $\jreward$ altogether (see Appendix~\ref{app:proofs} for details).
\end{remark}

\subsection{Log-linear Policies}
\shrink{-1mm}
Theorem~\ref{thm:tabular-bc} establishes a connection between the performance difference and behavioral cloning over representations given by $\phi$. The optimal latent policy for BC is $\pi_{*,\Zset}$, and this is the same policy which achieves minimal performance difference. Whether we can find $\pirep\approx \pi_{*,\Zset}$ depends on how we parameterize our latent policy. If $\pirep$ is tabular or if $\pirep$ is represented as a sufficiently expressive neural network, then the approximation error is effectively zero. But what about in other cases?
\spacesaver{

}
In this subsection, we consider $\Zset\subset\R^{d}$ and log-linear policies of the form
\begin{equation}
\pitheta(z) = \softmax(\theta^\top z) \defeq \left(\frac{\exp\{\theta_a^\top z\}}{\sum_{\tilde{a}} \exp\{\theta_{\tilde{a}}^\top z\}} \right)_{a\in\Aset},
\end{equation}
where $\theta\in \R^{d\times|A|}$. In general, $\pi_{*,\Zset}$ cannot be expressed as $\pi_\theta$ for some $\theta$. 
Nevertheless, we can still derive strong bounds for this scenario, by considering factored \emph{linear}~\cite{agarwal2020flambe} models $\Rrep,\Trep$:
\begin{theorem}
\label{thm:linear-bc}
Consider $\Zset\subset \R^d$, a representation function $\phi:\Sset\to\Zset$, and linear models $\Rrep,\Trep$:
\begin{equation*}
\Rrep(z,a)\defeq r(a)^\top z~;~ \Trep(s'|z,a)\defeq \psi(s',a)^\top z 
~~\text{for some}~~ r:\Aset\to\R^d~;~\psi:\Sset\times\Aset\to\R^d.
\end{equation*}
Denote the representation error $\epsilon_{\mathrm{R},\mathrm{T}}$ as in Theorem~\ref{thm:tabular-bc}.
Then the performance difference between $\pitarget$ and a latent policy $\pitheta(z)\defeq\softmax(\theta^\top z)$ may be bounded as,
\begin{equation}
\Diff(\pitheta,\pitarget) \leq
(1+\dchi(\visittarget\|\visitrb)^\frac{1}{2}) \cdot \epsilon_{\mathrm{R},\mathrm{T}} 
+ C\cdot \left\|\frac{\partial}{\partial\theta} \jbcrep(\pitheta) \right\|_1,
\end{equation}
where $C = \frac{1}{1-\gamma}\|r\|_\infty + \frac{\gamma\Rmax}{(1-\gamma)^2}\|\psi\|_\infty$. 
\end{theorem}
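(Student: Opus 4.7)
The plan is to mirror the proof of Theorem~\ref{thm:tabular-bc} up through the point at which Pinsker's inequality is invoked to produce the $\sqrt{\E[\dkl(\pi_{*,\Zset}\|\pirep)]}$ term, and in the log-linear setting to replace that final step with an argument exploiting the linear structure of $\Rrep$ and $\Trep$.

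First I would decompose $\Diff(\pi_\theta,\pitarget)$ exactly as in Theorem~\ref{thm:tabular-bc} into (i) a ``model-error'' piece, bounded by $(1+\dchi(\visittarget\|\visitrb)^{1/2})\,\epsilon_{\mathrm{R},\mathrm{T}}$ and using only the offline representation errors, and (ii) a ``policy-difference'' piece of the form $|\bar{J}(\pi_{*,\Zset}) - \bar{J}(\pi_\theta)|$ measured in the latent MDP induced by $\Rrep,\Trep$. Piece (i) makes no use of the linear structure and carries over verbatim from the previous proof, so the whole effect of the log-linear assumption is localized to piece (ii).

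For piece (ii), the key observation is that under linear models the action-value function in the latent MDP is linear in the representation: unrolling the Bellman equation shows $Q^{\pi_\theta}_{\Zset}(z,a) = w_a^\top z$ with $w_a \defeq r(a) + \gamma\sum_{s'}\psi(s',a)\,V^{\pi_\theta}_{\Zset}(\phi(s'))$, and using the crude $|V^{\pi_\theta}_{\Zset}|\le\Rmax/(1-\gamma)$ bound yields $\|w_a\|_\infty \le \|r\|_\infty + \tfrac{\gamma\Rmax}{1-\gamma}\|\psi\|_\infty$. Applying the performance difference lemma in the latent MDP and reducing the resulting latent visitation to $\visittarget_{\Zset}$ via the same measure-change / telescoping argument used in Theorem~\ref{thm:tabular-bc} (which does not depend on how the policy is parameterized), piece (ii) becomes
\begin{equation*}
\bar{J}(\pi_{*,\Zset}) - \bar{J}(\pi_\theta) \;=\; \tfrac{1}{1-\gamma}\sum_{a\in\Aset} w_a^\top\, \E_{z\sim \visittarget_{\Zset}}\!\bigl[(\pi_{*,\Zset}(a|z) - \pi_\theta(a|z))\,z\bigr].
\end{equation*}
The final ingredient is the softmax-gradient identity $\nabla_{\theta_a}\log\pi_\theta(a'|z) = (\mathbb{1}[a=a'] - \pi_\theta(a|z))\,z$; averaging $-\log\pi_\theta(a'|\phi(s))$ over $(s,a')\sim(\visittarget,\pitarget)$ and invoking the tower property on $z=\phi(s)$ identifies the inner expectation above as exactly $-\nabla_{\theta_a}\jbcrep(\pi_\theta)$. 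H\"older's inequality $|w_a^\top v|\le\|w_a\|_\infty\|v\|_1$, summed over $a$, then assembles the constants into $C = \tfrac{\|r\|_\infty}{1-\gamma} + \tfrac{\gamma\Rmax\|\psi\|_\infty}{(1-\gamma)^2}$ as advertised.

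The main obstacle I anticipate is the measure change from the latent-MDP visitation $d_{\Zset}^{\pi_{*,\Zset}}$ appearing in the performance difference lemma to the true target marginal $\visittarget_{\Zset}$ appearing in $\nabla_\theta\jbcrep$. In Theorem~\ref{thm:tabular-bc} this mismatch is absorbed into the $\epsilon_{\mathrm{R},\mathrm{T}}$ term via a $\jtrans$-driven telescoping argument, and one must carefully verify that this same absorption still produces no additional terms here when the quantity being translated through the telescope is a gradient rather than a KL divergence, and that the bound $\|w_a\|_\infty \le \|r\|_\infty + \tfrac{\gamma\Rmax}{1-\gamma}\|\psi\|_\infty$ uses the norm conventions consistent with the rest of the paper.
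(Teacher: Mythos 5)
Your core algebraic step --- the softmax-gradient identity for $\nabla_{\theta_a}\E[-\log\pitheta(a'|\phi(s))]$ combined with the linearity of $\Rrep,\Trep$ and H\"older's inequality --- is exactly the new ingredient the paper adds for this theorem (its Lemma~\ref{lem:bottleneck2}), so that part of your plan is sound. The gap is in the surrounding decomposition. The paper's proof of Theorem~\ref{thm:tabular-bc} never passes to the latent MDP: Lemma~\ref{lem:performance} is a resolvent-perturbation bound in the \emph{true} MDP that deliberately places every error term on $\visittarget$ alone (one-step expected-reward and next-state-distribution differences between $\pitarget$ and $\pirep\circ\phi$), Lemma~\ref{lem:model1} then swaps $\Reward,\Trans$ for the models inside those one-step terms, and the residual is a one-step policy discrepancy under the pushforward $\visittarget_\Zset$ --- not a difference of latent returns $|\bar{J}(\pi_{*,\Zset})-\bar{J}(\pitheta)|$. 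So the decomposition you describe as ``carrying over verbatim'' does not exist in the earlier proof, and the obstacle you flag at the end is real and unresolved: to isolate $|\bar{J}(\pi_{*,\Zset})-\bar{J}(\pitheta)|$ you must first bound $|J(\pitarget)-\bar{J}(\pi_{*,\Zset})|$ and $|J(\pitheta\circ\phi)-\bar{J}(\pitheta)|$ by a simulation argument, and the latter accumulates model error along $\pitheta$'s own visitation $d^{\pitheta}$ --- a quantity controlled neither by $\epsilon_{\mathrm{R},\mathrm{T}}$ (which carries only a $\dchi(\visittarget\|\visitrb)$ correction) nor by anything else in the stated bound. Likewise, the performance-difference lemma in the latent MDP yields an expectation over the latent visitation $d^{\pi_{*,\Zset}}_{\bar{\mdp}}$, not over $\visittarget_\Zset$, and there is no ``$\jtrans$-driven telescoping'' in the Theorem~\ref{thm:tabular-bc} proof available to absorb that mismatch.

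The repair is to keep your gradient/H\"older argument but apply it where the paper does: after Lemmas~\ref{lem:performance} and~\ref{lem:model1}, the only policy-dependent residuals are $\err_{\visittarget}(\pitarget,\pitheta,\Rmodel)$ and $\err_{\visittarget}(\pitarget,\pitheta,\Tmodel)$, each a one-step difference of model outputs under the two policies at $s\sim\visittarget$; linearity makes each equal to a sum over $a$ of $r(a)$ (respectively $\psi(s',a)$) dotted with $\nabla_{\theta_a}\jbcrep(\pitheta)$, whence H\"older gives $C=\tfrac{\|r\|_\infty}{1-\gamma}+\tfrac{\gamma\Rmax\|\psi\|_\infty}{(1-\gamma)^2}$ directly, with no latent value function needed. (Note also that your bound $\|w_a\|_\infty\le\|r\|_\infty+\tfrac{\gamma\Rmax}{1-\gamma}\|\psi\|_\infty$ silently drops the sum over $s'$ in $\sum_{s'}\psi(s',a)V^{\pitheta}_{\Zset}(\phi(s'))$, which is not controlled by $\max_{s',a}\|\psi(s',a)\|_\infty$ alone.)
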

\emph{Proof in Appendix~\ref{sec:proof-linear}.}


The statement of Theorem~\ref{thm:linear-bc} makes it clear that realizability of $\pi_{*,\Zset}$ is irrelevant for log-linear policies. It is enough to only have the gradient with respect to learned $\theta$ be close to zero, which is a guarantee of virtually all gradient-based algorithms. Thus, in these settings performing BC on top of learned representations is provably optimal \emph{regardless} of both the form of $\pitarget$ and the form of $\pi_{*,\Zset}$.

\spacesaver{
\begin{remark}[Kernel-based models]
    It is possible to extend the statement of Theorem~\ref{thm:linear-bc} to generalized linear dynamics and reward models based on kernels by replacing the gradient in the bound with the \emph{functional gradient} with respect to the kernel~\cite{dai2014scalable}.
\end{remark}
}

\shrink{-1mm}
\subsection{Sample Efficiency}
The previous theorems show that we can reduce imitation learning to (1) representation learning on an offline dataset, and (2) behavioral cloning on target demonstrations with the learned representation.
How does this compare to performing BC on the target demonstrations directly?
Intuitively, representation learning should help when the learned representation is ``good'' (\ie, $\epsilon_{\mathrm{R},\mathrm{T}}$ is small) and the complexity of the representation space $\Zset$ is low (\eg, $|\Zset|$ is small or $\Zset\subset \R^d$ for small $d$).
In this subsection, we formalize this intuition for a simple setting. We consider finite $\Sset,\Zset$.
For the representation $\phi$, we assume access to an oracle $\phi_M\defeq\opt_\phi(\Doff_M)$ which yields an error $\epsilon_{\mathrm{R},\mathrm{T}}(\phi_M)$.
For BC we consider learning a tabular $\pirep$ on the finite demonstration set $\Demos$. 
We have the following theorem, which characterizes the expected performance difference when using representation learning.
\begin{theorem}
\label{thm:sample}
Consider the setting described above. Let $\phi_M\defeq \opt_\phi(\Doff_M)$ and $\pi_{N,\Zset}$ be the policy resulting from BC with respect to $\phi_M$. Then we have,
\begin{equation}
    \E_{\Demos}[\Diff(\pi_{N,\Zset}, \pitarget)] \le (1+\dchi(\visitrb\|\visittarget)^\frac{1}{2}) \cdot\epsilon_{\mathrm{R},\mathrm{T}}(\phi_M) + C\cdot\sqrt{\frac{|\Zset||\Aset|}{N}},
\end{equation}
where $C$ is as in Theorem~\ref{thm:tabular-bc}.
\end{theorem}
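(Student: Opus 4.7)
The plan is to combine the pointwise bound from Theorem~\ref{thm:tabular-bc} with a standard sample-complexity analysis of the tabular maximum-likelihood estimator. First, I would observe that since $\pi_{N,\Zset}$ is obtained by minimizing $\jbcrep$ over tabular latent policies, the optimum is (a smoothed version of) the empirical conditional distribution $\pi_{N,\Zset}(a \mid z) \propto \sum_i \mathbf{1}[\phi_M(s_i)=z, a_i=a]$. Since Theorem~\ref{thm:tabular-bc} holds pointwise in the realization of $\Demos$ and $\Doff_M$, it yields
\[
\Diff(\pi_{N,\Zset},\pitarget) \;\leq\; (1+\dchi(\visittarget\|\visitrb)^{1/2})\,\epsilon_{\mathrm{R},\mathrm{T}}(\phi_M) \;+\; C \sqrt{\tfrac{1}{2}\,\E_{z\sim\visittarget_\Zset}\bigl[\dkl\bigl(\pi_{*,\Zset}(z)\,\|\,\pi_{N,\Zset}(z)\bigr)\bigr]}.
\]
Taking expectation over $\Demos$ and moving the expectation inside the square root via Jensen's inequality (which applies because $\sqrt{\cdot}$ is concave) reduces the problem to bounding the expected KL divergence.

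Next, I would bound $\E_{\Demos}\E_{z\sim\visittarget_\Zset}[\dkl(\pi_{*,\Zset}(z)\,\|\,\pi_{N,\Zset}(z))]$ by $|\Zset||\Aset|/N$. Fix $z\in\Zset$ and let $N_z = \sum_i \mathbf{1}[\phi_M(s_i)=z]$, which is distributed as $\mathrm{Bin}(N, \visittarget_\Zset(z))$. Conditional on $N_z$, the actions $\{a_i : \phi_M(s_i)=z\}$ are i.i.d.\ samples from $\pi_{*,\Zset}(\cdot \mid z)$ by the definition of $\pi_{*,\Zset}$, and $\pi_{N,\Zset}(\cdot \mid z)$ is their (Laplace-smoothed) empirical distribution over $|\Aset|$ categories. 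A standard sample-complexity bound for smoothed multinomial estimation then gives $\E[\dkl(\pi_{*,\Zset}(z)\,\|\,\pi_{N,\Zset}(z)) \mid N_z] \lesssim |\Aset|/(N_z+|\Aset|)$. Unconditioning on $N_z$ via $\E[1/(N_z+|\Aset|)] \leq 1/(N\visittarget_\Zset(z))$ and averaging over $z$:
\[
\E_\Demos\!\E_{z\sim\visittarget_\Zset}[\dkl] \;\leq\; \sum_{z\in\Zset} \visittarget_\Zset(z)\cdot\frac{|\Aset|}{N\visittarget_\Zset(z)} \;=\; \frac{|\Zset||\Aset|}{N}.
\]
Plugging this back into the Jensen-reduced inequality produces the desired bound, with the factor of $1/2$ absorbed into the constant $C$.

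The main obstacle will be the expected KL bound for the tabular MLE. The raw empirical distribution can have infinite KL against the true distribution when an action has $p(a)>0$ but is never observed, so some care is needed in choosing a smoothed estimator (add-one/Laplace is the natural choice) and in tracking constants through the conditioning on $N_z$. A minor care point is that the inequality $\E[1/(N_z+|\Aset|)] \leq 1/(N\visittarget_\Zset(z))$ requires a uniform upper bound that is loosest when $\visittarget_\Zset(z)$ is very small, so one may alternatively bound the KL by $\log|\Aset|$ in the $N_z=0$ case and handle the rare-$z$ mass separately; either route gives the advertised $\sqrt{|\Zset||\Aset|/N}$ scaling.
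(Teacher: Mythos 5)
Your overall architecture matches the paper's---apply the pretraining-plus-BC decomposition pointwise, then control the expected BC term by a tabular estimation bound of order $\sqrt{|\Zset||\Aset|/N}$---but routing the argument through the KL divergence creates a genuine problem that the paper deliberately avoids. The policy produced by BC (max-likelihood) over tabular latent policies is the \emph{unsmoothed} empirical conditional, and for that estimator $\E_{\Demos}\bigl[\dkl(\pi_{*,\Zset}(z)\|\pi_{N,\Zset}(z))\bigr]$ is infinite whenever some action with positive probability under $\pi_{*,\Zset}(\cdot\,|\,z)$ fails to appear in the demonstrations, which happens with positive probability for every finite $N$. You acknowledge this and switch to a Laplace-smoothed estimator, but then you are no longer analyzing the BC solution the theorem refers to, and you also inherit an unspecified constant from the smoothed-multinomial risk bound (your ``$\lesssim |\Aset|/(N_z+|\Aset|)$''), so the stated constant $C$ is not actually established. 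Your per-$z$ conditioning on $N_z$, the identification of the conditional action law with $\pi_{*,\Zset}(\cdot\,|\,z)$, and the unconditioning step $\E[1/(N_z+|\Aset|)]\le 1/(N\visittarget_\Zset(z))$ are all sound; the gap is confined to the choice of divergence and estimator.

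The paper's proof sidesteps both issues by \emph{not} starting from the final statement of Theorem~\ref{thm:tabular-bc}: it returns to the intermediate inequality~\eqref{eq:tv-diff} in the proof of Lemma~\ref{lem:onpolicy-bc}, before Pinsker's inequality is applied, where the BC error appears as $\E_{z\sim\visittarget_\Zset}[\dtv(\pi_{*,\Zset}(z)\|\pi_{N,\Zset}(z))]$. Total variation is bounded, so the raw empirical conditional causes no trouble, and Lemma~\ref{lem:tv-bc} bounds its expected conditional TV error by $\dtv(\rho\|\widehat{\rho})+\dtv(x\|\widehat{x})$, each term controlled by the $\tfrac{1}{2}\sqrt{k/n}$ empirical-distribution bound of Lemma~\ref{lem:empirical-tv}; this yields $\sqrt{|\Zset||\Aset|/N}$ with the constant $C$ exactly and with no expectation to pull inside a square root. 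If you want to keep your KL route you would need to restate the theorem for a smoothed estimator and track the multinomial risk constant; the simpler repair is to work with TV throughout.
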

\emph{Proof in Appendix~\ref{sec:proof-sample}.}

Note that application of vanilla BC to this setting would achieve a similar bound but with $\epsilon_{\mathrm{R},\mathrm{T}}=0$ and $|\Zset|=|\Sset|$. 
Thus, an improvement from representation learning is expected when $\epsilon_{\mathrm{R},\mathrm{T}}(\phi_M)$ and $|\Zset|$ are small.

\shrink{-1mm}
\subsection{Comparison to Bisimulation}
\label{sec:bisim}
The form of our representation learning objectives -- learning $\phi$ to be predictive of rewards and next state dynamics -- recalls similar ideas in the bisimulation literature~\cite{ferns2004metrics,gelada2019deepmdp,zhang2020learning,castro2020scalable}.
However, a key difference is that in bisimulation the divergence over next state dynamics is measured in the latent representation space; \ie, a divergence between $\phi\circ\Trans(s,a)$ and $f(\phi(s),a)$ for some ``latent space model'' $f$, whereas our proposed representation error is between $\Trans(s,a)$ and $\Trep(\phi(s),a)$.
We find that this difference is crucial, and in fact there exist no theoretical guarantees for bisimulation similar to those in Theorems~\ref{thm:tabular-bc} and~\ref{thm:linear-bc}.
Indeed, one can construct a simple example where the use of latent space models leads to a complete failure, see Figure~\ref{fig:bisim}.
\begin{figure}[h]
 \begin{center}
 \includegraphics[width=0.95\textwidth]{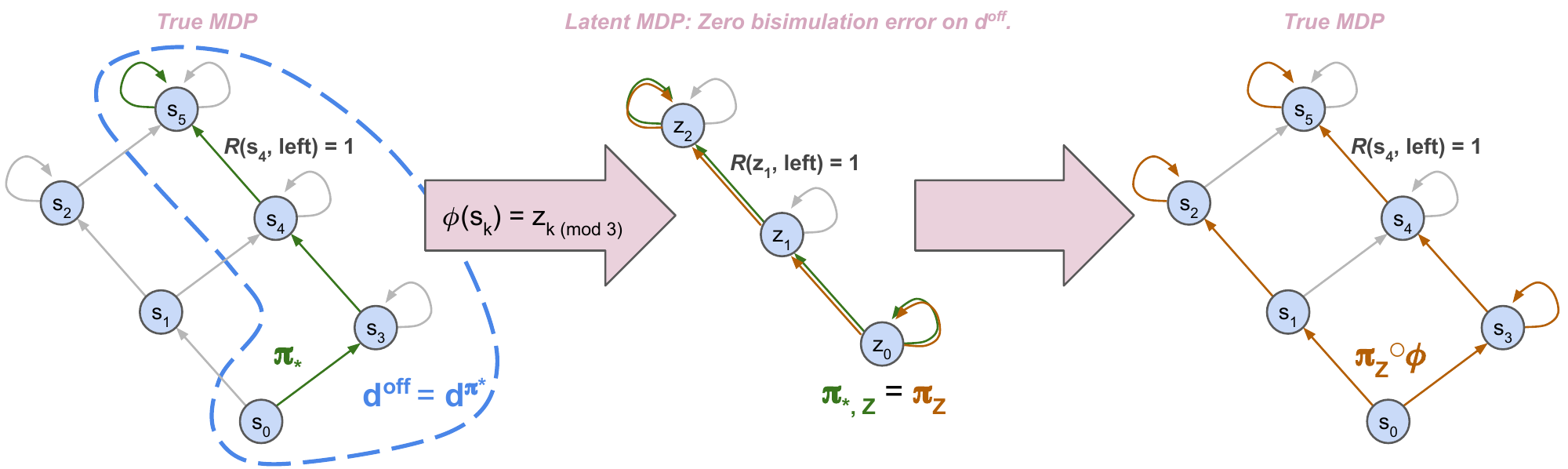} 
\end{center}
 \caption{\textbf{An example where a latent space model (bisimulation) approach would fail}. The MDP has six states and two actions (`left' and `right'); arrows denote action dynamics and the green-colored arrows denote the action selection of $\pitarget$ (left); rewards are zero everywhere except for $\Reward(s_4, \text{left}) = 1$. 
 In this example, we consider an offline distribution $\visitrb = \visittarget$, thus there is no distribution shift. The representation $\phi$ given by $\phi(s_k) = z_{k~\mathrm{mod}~3}$ perfectly preserves rewards $(\phi(s),a)\to r$ and latent transitions $(\phi(s),a)\to\phi(s')$ on $(s,a)\sim(\visitrb,\uniform)$, ensuring zero bisimulation error. Imitation learning on this representation yields a policy $\pirep$ which exactly matches $\pi_{*,\Zset}$ (orange-colored arrows in the middle) but which achieves significantly worse performance compared to $\pitarget$ on the original MDP (right). Unlike latent space model approaches, 
 our approach measures the dynamics error on $(\phi(s),a)\to s'$, and so would rightfully reject the $\phi$ presented here.
 }
 \shrink{-2mm}
 \label{fig:bisim}
\end{figure}

\shrink{-5mm}
\section{Learning the Representations in Practice}
\label{sec:learning}
\shrink{-2mm}
The bounds presented in the previous section suggest that a good representation $\phi$ should be learned to minimize $\jreward,\jtrans$ in~\eqref{eq:jreward} and~\eqref{eq:jtrans}. To this end we propose to learn $\phi$ in conjunction with auxiliary models of reward $\Rrep$ and dynamics $\Trep$. The offline representation learning objective is given by,
\begin{equation}\label{eq:reprobj}
    \jrep(\Rrep,\Trep,\phi) \defeq \frac{1}{2}\E_{(s,a,r,s')\sim\visitrb}[\alpha_\mathrm{R}\cdot (r - \Rrep(\phi(s),a))^2 - \alpha_\mathrm{T}\cdot \log \Trep(s' | \phi(s), a)],
\end{equation}
where $\alpha_\mathrm{R},\alpha_\mathrm{T}$ are appropriately chosen hyperparameters; in our implementation we choose $\alpha_\mathrm{R}=1,\alpha_\mathrm{T}=(1-\gamma)^{-1}$ to roughly match the coefficients of the bounds in Theorems~\ref{thm:tabular-bc} and~\ref{thm:linear-bc}. Once one chooses parameterizations of $\Rrep,\Trep$, this objective may be optimized using any stochastic sample-based solver, \eg SGD.

\shrink{-2mm}
\subsection{Contrastive Learning}
\label{sec:contrastive}
\shrink{-2mm}
One may recover a contrastive learning objective by parameterizing $\Trep$ as an energy-based model. Namely, consider parameterizing $\Trep$ as
\begin{equation}
    \Trep(s'|z, a) \propto \rho(s') \exp\left\{-||z - g(s',a)||^2 / 2\right\},
\end{equation}
where $\rho$ is a fixed (untrainable) distribution over $\Sset$ (typically set to the distribution of $s'$ in $\visitrb$) and $g$ is a learnable function $\Sset\times\Aset\to\Zset$ (\eg, a neural network). Then $\E_{\visitrb}[-\log\Trep(s'|\phi(s),a)]$ yields a contrastive loss:
\begin{equation*}
    \E_{\visitrb}[-\log\Trep(s'|\phi(s),a)] = \frac{1}{2}\E_{\visitrb}[||\phi(s) - g(s',a)||^2] + \log\E_{\tilde{s}'\sim\rho}[\exp\{-||\phi(s) - g(\tilde{s}',a)||^2 / 2\}].
\end{equation*}
Similar contrastive learning objectives have appeared in related works~\cite{aytar2018playing,yang2021representation}, and so our theoretical bounds can be used to explain these previous empirical successes.
\shrink{-2mm}
\subsection{Linear Models with Contrastive Fourier Features}
\label{sec:fourier}
\shrink{-2mm}
While the connection between temporal contrastive learning and approximate dynamics models has appeared in previous works~\cite{nachum2018near}, it is not immediately clear how one should learn the approximate \emph{linear} dynamics required by Theorem~\ref{thm:linear-bc}. In this section, we show how the same contrastive learning objective can be used to learn approximate linear models, thus illuminating a new connection between contrastive learning and near-optimal sequential decision making; see Appendix~\ref{app:code} for pseudocode.
\spacesaver{

}
We propose to learn a dynamics model $\overline{\Trans}(s'|s,a)\propto \rho(s')\exp\{-||f(s) - g(s',a)||^2/2\}$ for some functions $f,g$ (\eg, neural networks), which admits a similar contrastive learning objective as mentioned above. Note that this parameterization \emph{does not} involve $\phi$. To recover $\phi$, we may leverage random Fourier features from the kernel literature~\cite{rahimi2007random}. Namely, for $k$-dimensional vectors $x,y$ we can approximate $\exp\{-||x - y||^2/2\} \approx \frac{2}{d} \varphi(x)^\top \varphi(y)$, where $\varphi(x) \defeq \cos(Wx + b)$ for $W$ a $d\times k$ matrix with entries sampled from a standard Gaussian and $b$ a vector with entries sampled uniformly from $[0, 2\pi]$.
We can therefore approximate $\overline{\Trans}$ as follows, using $E(s,a) \defeq \E_{\tilde{s}'\sim\rho}[\exp\{-||f(s)-g(\tilde{s}',a)||^2/2\}]$:
\begin{equation}
    \overline{\Trans}(s'|s,a) = \frac{\rho(s')}{E(s,a)}\exp\{-\|f(s) - g(s',a)\|^2/2\} \approx \frac{2\rho(s')}{d\cdot E(s,a)}\varphi(f(s))^\top \varphi(g(s',a)).
\end{equation}
Finally, we recover $\phi:\Sset\to\R^{|\Aset|d},\psi:\Sset\times\Aset\to\R^{|\Aset|d}$ as 
\begin{equation}
\phi(s)\defeq \left[\varphi(f(s)) / E(s,a)\right]_{a\in\Aset}~;~~~
\psi(s',a)\defeq \left[1_{a=\tilde{a}}\cdot 2\rho(s') \varphi(s',\tilde{a})/d \right]_{\tilde{a}\in\Aset},
\end{equation}
ensuring $\overline{\Trans}(s'|s,a) \approx \phi(s)^\top \psi(s',a)$ (\ie, $\Trep(s'|z,a)=\psi(s',a)^\top z$), as required for Theorem~\ref{thm:linear-bc}.\footnote{While we use random Fourier features with the radial basis kernel, it is clear that a similar derivation can be used with other approximate featurization schemes and other kernels.}
Notably, during learning explicit computation of $\psi(s',a)$ is never needed, while $E(s,a)$ is straightforward to estimate from data when $\rho(s') = \visitrb(s')$,\footnote{In our experiments, we ignore the scaling $E(s,a)$ altogether and simply set $\phi(s)\defeq \varphi(f(s))$. We found this simplification to have little effect on results.} thus making the whole learning procedure -- both the offline representation learning and downstream behavioral cloning -- practical and easy to implement with deep neural network function approximators for $f,g,\Rrep$; in fact, one can interpret $\phi$ as simply adding an additional untrainable neural network layer on top of $f$, and so these derivations may partially explain why previous works in supervised learning found benefit from adding a non-linear layer on top of representations~\cite{chen2020simple}.
\shrink{-2mm}
\section{Experiments}
\shrink{-2mm}
\label{sec:exp}

We now empirically verify the performance benefits of the proposed representation learning objective in both tabular and Atari game environments. See environment details in Appendix~\ref{app:exp}.

\begin{figure}
 \begin{center}
 \includegraphics[width=0.99\textwidth]{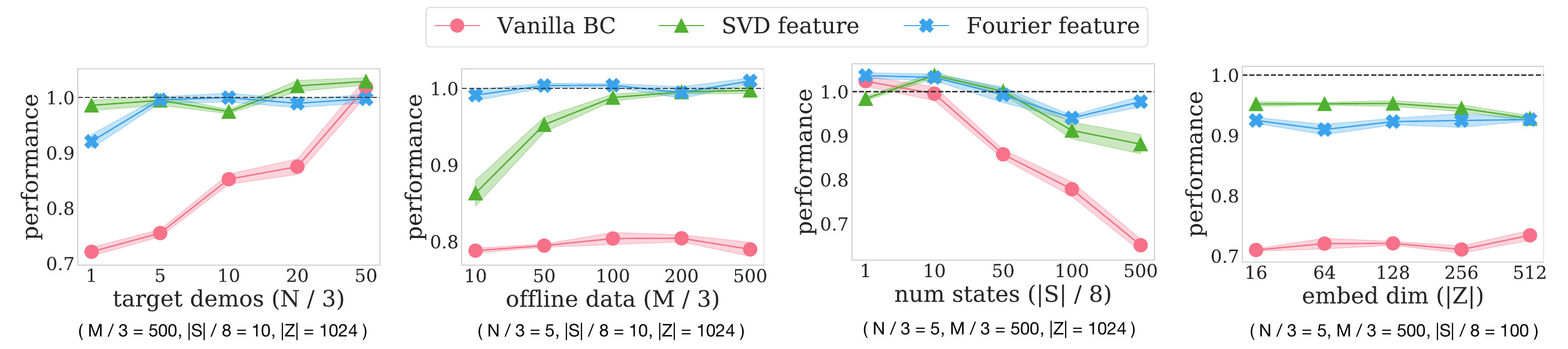} 
\end{center}
 \caption{Advantages of representation learning over vanilla behavioral cloning in the tree environment across different $N,M,|S|,|Z|$. Each subplot shows the average performance and standard error across five seeds. Black dotted line shows the performance of the target policy. Representation learning consistently yields significant performance gains.
  }
  \shrink{-4mm}
 \label{fig:tabular}
\end{figure}

\shrink{-1mm}
\subsection{Tree Environments with Low-Rank Transitions}
\shrink{-1mm}

For tabular evaluation, we construct a decision tree environment whose transitions exhibit low-rank structures. To achieve this, we first construct a ``canonical'' three-level binary tree where each node is associated with a stochastic reward for taking left or right branch and a stochastic transition matrix indicating the probability of landing at either child node. 
We then duplicate this canonical tree in the state space so that an agent walks down the duplicated tree while the MDP transitions are determined by the canonical tree, thus it is possible to achieve $\epsilon_{\mathrm{R},\mathrm{T}}=0$ with $|\Zset|=8$.  We collect the offline data using a uniform random policy and the target demonstrations using a return-optimal policy.

We learn contrastive Fourier features as described in Section~\ref{sec:fourier} using tabular $f,g$. We then fix these representations and train a log-linear policy on the target demonstrations. 
For the baseline, we learn a vanilla BC policy with tabular parametrization directly on target demonstrations.

We also experiment with representations given by singular value decomposition (SVD) of the empirical transition matrix, which is another form of learning factored linear dynamics. 
~\Figref{fig:tabular} shows the performance achieved by the learned policy with and without representation learning. Representation learning consistently yields significant performance gains, especially with few target demonstrations. SVD performs similar to contrastive Fourier features when the offline data is abundant with respect to the state space size, but degrades as the offline data size reduces or the state space grows. 

\shrink{-1mm}
\subsection{Atari 2600 with Deep Neural Networks}
\shrink{-1mm}


We now study the practical benefit of the proposed contrastive learning objective to imitation learning on $60$ Atari 2600 games~\cite{bellemare2013arcade}, taking for the offline dataset the DQN Replay Dataset~\cite{agarwal2020optimistic}, which for each game provides 50M steps collected during DQN training. 
For the target demonstrations, we take $10$k single-step transitions from the last 1M steps of each dataset, corresponding to the data collected near the end of the DQN training. For the offline data, we use all $50$M transitions of each dataset. 

\begin{figure}
\centering
 \shrink{-7mm}
 \includegraphics[width=\textwidth]{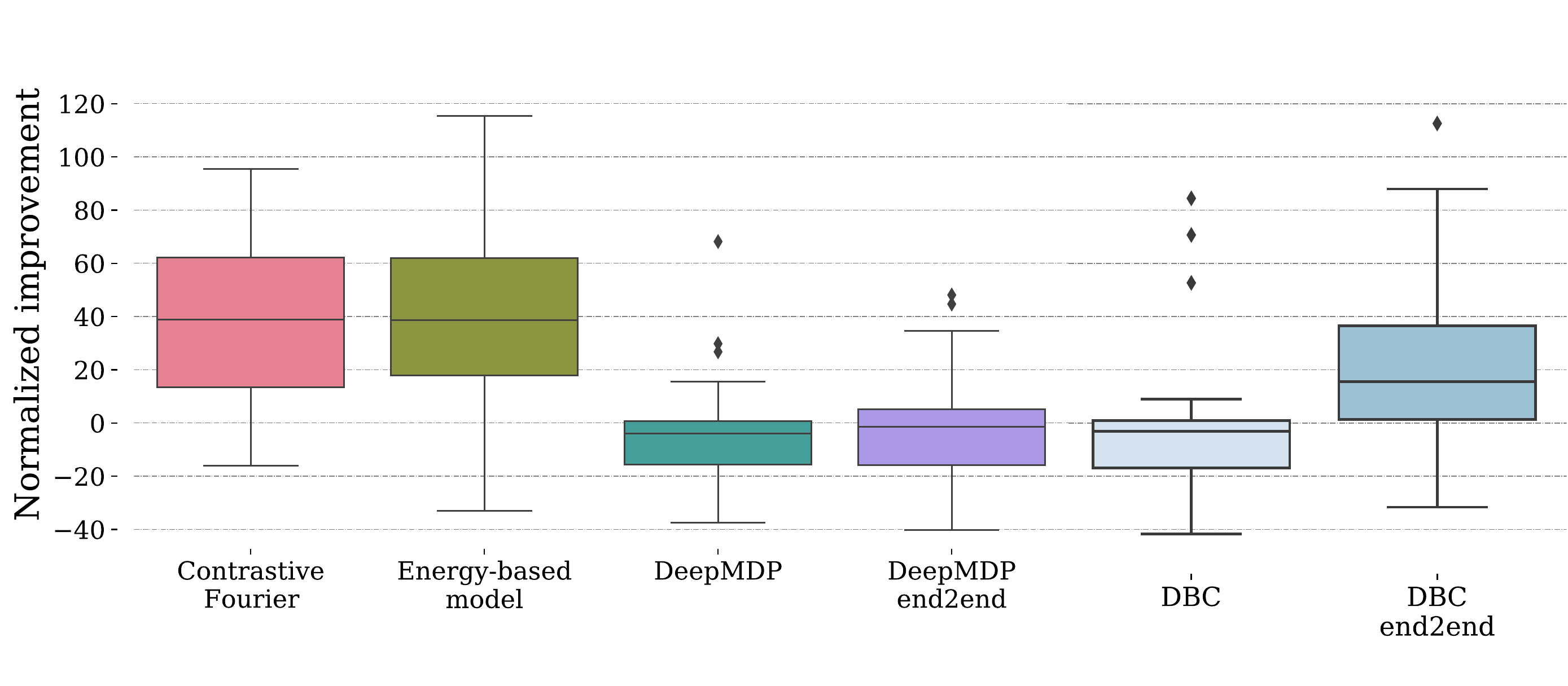}
 \caption{Performance improvements of contrastive Fourier features (setting of Theorem~\ref{thm:linear-bc}), energy-based model (setting of Theorem~\ref{thm:tabular-bc}), and bisimulation -- DeepMDP~\citep{gelada2019deepmdp} and DBC~\citep{zhang2020learning} -- over vanilla BC. In the baselines, `end2end' refers to allowing gradients to pass into the representation during downstream imitation learning; by default, the representation is fixed during this phase. Each box and whisker shows the percentiles of normalized improvements (see details in Appendix~\ref{app:exp}) among the $60$ Atari games.}
 \label{fig:deepmdp}
 \shrink{-6mm}
\end{figure}

For our learning agents, we extend the implementations found in Dopamine~\cite{castro2018dopamine}. We use the standard Atari CNN architecture to embed image-based inputs to vectors of dimension $256$. In the case of vanilla BC, we pass this embedding to a log-linear policy and learn the whole network end-to-end with behavioral cloning. 
For contrastive Fourier features, we use separate CNNs to parameterize $g,f$ in the objective in Section~\ref{sec:fourier}, and
the representation $\phi\defeq \varphi\circ f$ is given by the random Fourier feature procedure described in Section~\ref{sec:fourier}.
A log-linear policy is then trained on top of this representation, but without passing any BC gradients through $\phi$. 
This corresponds to the setting of Theorem~\ref{thm:linear-bc}.
We also experiment with the setting of Theorem~\ref{thm:tabular-bc}; in this case the setup is same as for contrastive Fourier features, only that we define $\phi\defeq f$ (\ie, $P_Z$ is an energy-based dynamics model) and we parameterize $\pirep$ as a more expressive single-hidden-layer softmax policy on top of $\phi$.

We compare contrastive learning with Fourier features and energy-based models to two latent space models, DeepMDP~\cite{gelada2019deepmdp} and Deep Bisimulation for Control (DBC)~\cite{zhang2020learning}, in~\Figref{fig:deepmdp}. Both linear (Fourier features) and energy-based parametrization of contrastive learning achieve dramatic performance gains ($>40\%$) on over half of the games. DeepMDP and DBC, on the other hand, achieve little improvement over vanilla BC when presented as a separate loss from behavioral cloning. Enabling end-to-end learning of the latent space models as an auxiliary loss to behavioral cloning leads to better performance, but DeepMDP and DBC still underperform contrastive learning.
See Appendix~\ref{app:exp} for further ablations.

\section{Conclusion}
\label{sec:conc}
We have derived an offline representation learning objective which, when combined with BC, provably minimizes an upper bound on the performance difference from the target policy. We further showed that the proposed objective can be implemented as contrastive learning with an optional projection to Fourier features. 
Interesting avenues for future work include (1) extending our theory to multi-step contrastive learning, popular in practice~\cite{yang2021representation,aytar2018playing}, (2) deriving similar results for policy learning in offline and online RL settings, and (3) reducing the effect of offline distribution shifts.
We also note that our use of contrastive Fourier features for learning a linear dynamics model may be of independent interest, especially considering that a number of theoretical RL works rely on such an approximation (\eg,~\cite{agarwal2020flambe,yang2020reinforcement}), while to our knowledge no previous work has demonstrated a practical and scalable learning algorithm for linear dynamics approximation. 
Determining if the technique of learning contrastive Fourier features works well for these settings offers another interesting direction to explore.

\paragraph{Limitations}
One of the main limitations inherent in our theoretical derivations is the dependence on distribution shift, in the form of $1+\dchi(\visitrb\|\visittarget)^\frac{1}{2}$ in all bounds. Arguably, some dependence on the offline distribution is unavoidable: Certainly, if a specific state does not appear in the offline distribution, then there is no way to learn a good representation of it (without further assumptions on the MDP). In practice one potential remedy is to make sure that the offline dataset sufficiently covers the whole state space; the inequality $1+\dchi(p\|q) \le \|p / q\|_\infty^2$  will ensure this limits the dependence on distribution shift.

\begin{ack}
We thank Bo Dai, Rishabh Agarwal, Mohammad Norouzi, Pablo Castro, Marlos Machado, Marc Bellemare, and the rest of the Google Brain team for fruitful discussions and valuable feedback.
\end{ack}

\bibliography{ref}
\bibliographystyle{plain}


\appendix
\newpage
\section{Pseudocode}\label{app:code}
We present basic pseudocode of feature learning below.

\begin{algorithm*}[h]
\caption{Representation learning with contrastive energy-based models}
 \begin{algorithmic}[1]
 \REQUIRE Dataset $\Doff$, parameterized functions $\phi:\Sset\to\Zset$, $g:\Sset\times\Aset\to\R^d, h:\Zset\times\Aset\to\R$, batch size $B$, weights $\alpha_{\mathrm{R}},\alpha_{\mathrm{T}}$.
\FOR{$k=0,1,2,\dots$}
\STATE Sample $\{(s_i,a_i,r_i,s'_i)\}_{i=1}^B \sim \Doff$.state visi
\STATE Compute reward loss $\ell_{R,i} = (r_i - h(\phi(s_i), a_i))^2$.
\STATE Compute dynamics loss  \\ $\ell_{T,i} = \|\phi(s_i) - g(s'_i,a_i)\|^2/2 + \log\sum_{j=1}^n \exp\{-\|\phi(s_i) - g(s'_j,a_i)\|^2/2\}$.
\STATE Update $\phi,g,h$ according to loss $\sum_{i} (\alpha_{\mathrm{R}}\cdot\ell_{R,i} + \alpha_{\mathrm{T}}\cdot\ell_{T,i})$.
\ENDFOR
\RETURN $\phi$
\end{algorithmic}
\end{algorithm*}

For the Fourier feature representation, we normalize the components of $f$, which, when the inputs $f(s)$ are normally distributed with some unknown mean and variance, may be interpreted as focusing the sampling distribution of $W$ on the most informative Fourier features; mathematically, one may show this still approximates the kernel as $d\to\infty$ by using importance sampling with importance weights placed on $\psi$ instead of $\phi$.
\begin{algorithm*}[h]
\caption{Representation learning with contrastive Fourier features}
 \begin{algorithmic}[1]
 \REQUIRE Dataset $\Doff$, parameterized functions $f:\Sset\to\R^k,g:\Sset\times\Aset\to\R^k$, representation dimension $d$, batch size $B$, weights $\alpha_{\mathrm{R}},\alpha_{\mathrm{T}}$.
\STATE Set $W\in\R^{d\times k}$ as $W_{i,j}\sim \mathrm{Normal}(0,1)$.
\STATE Set $b\in\R^{d}$ as $b_{i}\sim\mathrm{Unif}(0,2\pi)$.
\STATE Initialize $f_{\mathrm{avg}} = 0, f_{\mathrm{sq}} = 1$.
\STATE Define $\mathrm{normalize}$ as $\mathrm{normalize}(x) = (x - f_{\mathrm{avg}}) / \sqrt{f_{\mathrm{sq}}^2 - f_{\mathrm{avg}}^2}$.
\STATE Define $\phi$ as $\phi(s) = \cos(W\cdot \mathrm{normalize}(f(s)) + b)$.
\STATE Initialize $h\in\R^{k\times |\Aset|}$ as $h=0$.
\FOR{$k=0,1,2,\dots$}
\STATE Sample $\{(s_i,a_i,r_i,s'_i)\}_{i=1}^B \sim \Doff$.
\STATE Update $f_{\mathrm{avg}}$ with $\frac{1}{B}\sum_i f(s_i)$.
\STATE Update $f_{\mathrm{sq}}$ with $\frac{1}{B}\sum_i f(s_i)^2$.
\STATE Compute $z_i = \phi(s_i)$.
\STATE Compute reward loss $\ell_{R,i} = (r_i - h_{a_i}^\top z_i)^2$.
\STATE Compute dynamics loss  \\ $\ell_{T,i} = \|f(s_i) - g(s'_i,a_i)\|^2/2 + \log\sum_{j=1}^n \exp\{-\|f(s_i) - g(s'_j,a_i)\|^2/2\}$.
\STATE Update $f,g,h$ according to loss $\sum_{i} (\alpha_{\mathrm{R}}\cdot\ell_{R,i} + \alpha_{\mathrm{T}}\cdot\ell_{T,i})$.
\ENDFOR
\RETURN $\phi$
\end{algorithmic}
\end{algorithm*}

\newpage
\section{Proofs}
\label{app:proofs}

\subsection{Foundational Lemmas}
We first present a basic performance difference lemma:
\begin{lemma}
\label{lem:performance}
If $\pi_1$ and $\pi_2$ are two policies in $\mdp$, then 
\begin{equation}
\label{eq:perf-diff-bound}
\Diff(\pi_2, \pi_1) \leq \frac{1}{1-\gamma}\err_{d^{\pi_1}}(\pi_1,\pi_2,\Reward)
+ \frac{\gamma \Rmax}{(1-\gamma)^2}
\err_{d^{\pi_1}}(\pi_1,\pi_2,\Trans),
\end{equation}
where 
\begin{align}
    \err_{d^{\pi_1}}(\pi_1,\pi_2,\Reward) &\defeq \left|\E_{s\sim d^{\pi_1},a_1\sim\pi_1(s),a_2\sim\pi_2(s)}[\Reward(s,a_1) - \Reward(s,a_2)]\right| \\
    \err_{d^{\pi_1}}(\pi_1,\pi_2,\Trans) &\defeq \sum_{s'\in\Sset} \left|\E_{s\sim d^{\pi_1},a_1\sim\pi_1(s),a_2\sim\pi_2(s)}[\Trans(s'|s,a_1) - \Trans(s'|s,a_2)]\right|.
\end{align}
Note the second quantity above is a scaled TV-divergence between $\Trans\circ\pi_1\circ d^{\pi_1}$ and $\Trans\circ\pi_2\circ d^{\pi_1}$.
When the MDP reward is action-independent, \ie $\Reward(s,a_1)=\Reward(s,a_2)$ for all $s\in\Sset,a_1,a_2\in\Aset$, the same bound holds with the reward term removed.
\end{lemma}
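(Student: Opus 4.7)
The plan is to split the performance gap into a policy-shift term evaluated at the visitation of $\pi_1$, plus a state-visitation-shift term, and then close the latter by a one-step Bellman-flow argument.

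First, using the identity $\jrl(\pi) = (1-\gamma)^{-1}\E_{s\sim d^\pi,\, a\sim\pi(s)}[\Reward(s,a)]$ and inserting $\pm\E_{s\sim d^{\pi_1},\, a\sim\pi_2(s)}[\Reward(s,a)]$, I would write
\[
(1-\gamma)\bigl(\jrl(\pi_2)-\jrl(\pi_1)\bigr) = \underbrace{\E_{s\sim d^{\pi_2},\, a\sim\pi_2(s)}[\Reward(s,a)] - \E_{s\sim d^{\pi_1},\, a\sim\pi_2(s)}[\Reward(s,a)]}_{\text{(A)}} + \underbrace{\E_{s\sim d^{\pi_1},\, a\sim\pi_2(s)}[\Reward(s,a)] - \E_{s\sim d^{\pi_1},\, a\sim\pi_1(s)}[\Reward(s,a)]}_{\text{(B)}}.
\]
Term (B) equals (up to sign) exactly $\err_{d^{\pi_1}}(\pi_1,\pi_2,\Reward)$, contributing the first summand of~\eqref{eq:perf-diff-bound} after dividing by $(1-\gamma)$. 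Term (A) is bounded in absolute value by $\Rmax\,\|d^{\pi_2}-d^{\pi_1}\|_1$ since rewards lie in $[-\Rmax,\Rmax]$.

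Next, to control the visitation shift I would invoke the Bellman flow fixed-point equation $d^\pi = (1-\gamma)\init + \gamma\, T^\pi d^\pi$, where $T^\pi$ is the Markov operator $(T^\pi \rho)(s') \defeq \sum_{s,a} \rho(s)\pi(a|s)\Trans(s'|s,a)$. Adding and subtracting $\gamma T^{\pi_2} d^{\pi_1}$ yields
\[
d^{\pi_2}-d^{\pi_1} = \gamma\, T^{\pi_2}(d^{\pi_2}-d^{\pi_1}) + \gamma\,(T^{\pi_2}-T^{\pi_1})\,d^{\pi_1}.
\]
Because $T^{\pi_2}$ is a Markov kernel, it is $L_1$-nonexpansive on signed measures, and the second summand has $L_1$ norm exactly $\err_{d^{\pi_1}}(\pi_1,\pi_2,\Trans)$ by the definition in the lemma. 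Rearranging gives $\|d^{\pi_2}-d^{\pi_1}\|_1 \le \frac{\gamma}{1-\gamma}\,\err_{d^{\pi_1}}(\pi_1,\pi_2,\Trans)$. Substituting into the bound on (A) and combining with (B) recovers~\eqref{eq:perf-diff-bound}.

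The action-independent-reward case follows because (B) is identically zero by definition while the bound on (A) is unchanged. The main (mildly delicate) obstacle is the self-bounding step for $\|d^{\pi_2}-d^{\pi_1}\|_1$: one must justify applying $L_1$-nonexpansion of $T^{\pi_2}$ to the signed measure $d^{\pi_2}-d^{\pi_1}$, which is routine once one verifies that $T^{\pi_2}$ maps signed measures of bounded total variation into themselves while preserving the zero-sum property, and that the resulting inequality $\|d^{\pi_2}-d^{\pi_1}\|_1 \le \gamma \|d^{\pi_2}-d^{\pi_1}\|_1 + \gamma\,\err_{d^{\pi_1}}(\pi_1,\pi_2,\Trans)$ can be solved (which requires $\gamma<1$, guaranteed by assumption).
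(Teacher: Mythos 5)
Your proof is correct and follows essentially the same route as the paper's: the identical add-and-subtract decomposition into a policy-shift term at $d^{\pi_1}$ (yielding $\err_{d^{\pi_1}}(\pi_1,\pi_2,\Reward)$) plus a visitation-shift term (yielding the transition error). The only difference is presentational — you bound $\|d^{\pi_2}-d^{\pi_1}\|_1$ via the Bellman-flow fixed point and a self-bounding inequality, whereas the paper applies the resolvent identity to $(I-\gamma\Trans\Pi_i)^{-1}$ and bounds the resulting operator norms; these are the same $\gamma$-contraction argument in different notation.
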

\begin{proof}
Following similar derivations in~\cite{achiam2017constrained,nachum2018near}, we express the performance difference in linear operator notation:
\begin{equation}
    \Diff(\pi_2,\pi_1) = |\Reward\Pi_2(I - \gamma \Trans\Pi_2)^{-1}\init - \Reward\Pi_1(I - \gamma \Trans\Pi_1)^{-1}\init|,
\end{equation}
where $\Pi_1,\Pi_2$ are linear operators $\Sset\to\Sset\times\Aset$ such that $\Pi_i \nu(s,a) = \pi_i(a|s)\nu(s)$.
Notice that $d^{\pi_1}$ may be expressed in this notation as $(1-\gamma)(I - \gamma \Trans\Pi_1)^{-1}\init$. We split the expression above into two parts:
\begin{multline}
\label{eq:two-parts}
    \Diff(\pi_2,\pi_1) \le |\Reward\Pi_2(I - \gamma \Trans\Pi_2)^{-1}\init - \Reward\Pi_2(I - \gamma \Trans\Pi_1)^{-1}\init| \\ + |\Reward\Pi_2(I - \gamma \Trans\Pi_1)^{-1}\init - \Reward\Pi_1(I - \gamma \Trans\Pi_1)^{-1}\init|.
\end{multline}
We may write the first term above as
\begin{multline}
    |\Reward\Pi_2 (I - \gamma \Trans\Pi_2)^{-1}((I - \gamma\Trans\Pi_1) - (I - \gamma\Trans\Pi_2))(I - \gamma \Trans\Pi_1)^{-1}\init| \\
    = (1-\gamma)^{-1}\gamma\cdot |\Reward\Pi_2 (I - \gamma \Trans\Pi_2)^{-1}(\Trans\Pi_2 - \Trans\Pi_1) d^{\pi_1}|.
\end{multline}
Using matrix norm inequalities, we bound the above by
\begin{equation}
(1-\gamma)^{-1}\gamma\cdot \|\Reward\Pi_2\|_{\infty} \|(I - \gamma \Trans\Pi_2)^{-1}\|_{1,\infty}\cdot |(\Trans\Pi_2 - \Trans\Pi_1) d^{\pi_1}|.
\end{equation}
We know $\|\Reward\Pi_2\|_{\infty} \le \Rmax$ and, since $\Trans\Pi_2$ is a stochastic matrix, $\|(I - \gamma \Trans\Pi_2)^{-1}\|_{1,\infty} \le \sum_{t=0}^\infty \gamma^t\|\Trans\Pi_2\|_{1,\infty} = (1-\gamma)^{-1}$. Thus, we bound the first term of~\eqref{eq:two-parts} by
\begin{equation}
\frac{\gamma\Rmax}{(1-\gamma)^2}|(\Trans\Pi_2 - \Trans\Pi_1) d^{\pi_1}| = \frac{\gamma\Rmax}{(1-\gamma)^2} \err_{d^{\pi_1}}(\pi_1,\pi_2,\Trans).
\end{equation}
For the second term of~\eqref{eq:two-parts}, we have
\begin{align}
    |\Reward\Pi_2(I - \gamma \Trans\Pi_1)^{-1}\init - \Reward\Pi_1(I - \gamma \Trans\Pi_1)^{-1}\init| &= (1-\gamma)^{-1}\cdot|(\Reward\Pi_2 - \Reward\Pi_1) d^{\pi_1}| \\
    &= \frac{1}{1-\gamma}\err_{d^{\pi_1}}(\pi_1,\pi_2,\Reward),
\end{align}
and so we immediately achieve the desired bound in~\eqref{eq:perf-diff-bound}.

In the case of action-independent rewards, one may follow the same derivation for the first part of~\eqref{eq:two-parts}, starting with 
\begin{equation}
    \Diff(\pi_2,\pi_1) = |\Reward(I - \gamma \Trans\Pi_2)^{-1}\init - \Reward(I - \gamma \Trans\Pi_1)^{-1}\init|.
\end{equation}

\end{proof}

We now introduce transition and reward models to proxy $\err_{d^{\pi_1}}(\pi_1,\pi_2,\Reward)$ and $\err_{d^{\pi_1}}(\pi_1,\pi_2,\Trans)$:
\begin{lemma}
\label{lem:model1}
For $\pi_1$ and $\pi_2$ two policies in $\mdp$ and any models $\Rmodel:\Sset\times\Aset\to[-\Rmax,\Rmax],\Tmodel:\Sset\times\Aset\to\Delta(\Sset)$ we have, 
\begin{align}
\err_{d^{\pi_1}}(\pi_1,\pi_2,\Reward) &\le |\Aset| \E_{(s,a)\sim (d^{\pi_1}, \uniform)}[|\Reward(s,a)-\Rmodel(s,a)|] + 
\err_{d^{\pi_1}}(\pi_1,\pi_2,\Rmodel), \\
\err_{d^{\pi_1}}(\pi_1,\pi_2,\Trans) &\le 2|\Aset|\E_{(s,a)\sim (d^{\pi_1}, \uniform)}[\dtv(\Trans(s,a)\|\Tmodel(s,a))] + 
\err_{d^{\pi_1}}(\pi_1,\pi_2,\Tmodel).
\end{align}
\end{lemma}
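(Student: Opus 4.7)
The plan is to prove both inequalities in parallel, since they share the same structure. For each choice of $X \in \{\Reward, \Trans\}$ with corresponding model $\widehat{X} \in \{\Rmodel, \Tmodel\}$, I would introduce $\widehat{X}$ via the algebraic identity
\begin{equation*}
X(s,a_1) - X(s,a_2) = \bigl[\widehat{X}(s,a_1) - \widehat{X}(s,a_2)\bigr] + \bigl[(X(s,a_1) - \widehat{X}(s,a_1)) - (X(s,a_2) - \widehat{X}(s,a_2))\bigr],
\end{equation*}
and apply the triangle inequality inside the outer absolute value that defines $\err$. The first bracket contributes exactly $\err_{d^{\pi_1}}(\pi_1, \pi_2, \widehat{X})$, so the remaining work is to bound the second bracket by a $(d^{\pi_1}, \uniform)$-expectation of the pointwise model error.

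For the reward inequality, I would rewrite the remainder as
\begin{equation*}
\Bigl|\E_{s\sim d^{\pi_1}}\Bigl[\textstyle\sum_a (\pi_1(a|s) - \pi_2(a|s)) \cdot (\Reward(s,a) - \Rmodel(s,a))\Bigr]\Bigr|,
\end{equation*}
push the absolute value inside both the expectation over $s$ and the sum over $a$, bound $|\pi_1(a|s) - \pi_2(a|s)| \le 1$ (since both are probabilities), and rewrite $\sum_a$ as $|\Aset|\cdot \E_{a\sim\uniform}$. The critical point — and the reason the lemma's coefficient is $|\Aset|$ rather than $2|\Aset|$ — is that keeping the two action terms \emph{paired} under a single absolute value leaves only one copy of the model error $|\Reward - \Rmodel|$; splitting into separate $\Reward(s,a_1) - \Rmodel(s,a_1)$ and $\Reward(s,a_2) - \Rmodel(s,a_2)$ contributions would double the constant.

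The transition case follows the same template, with the extra outer $\sum_{s'}$ carried through. After pushing the absolute value inside, bounding $|\pi_1(a|s) - \pi_2(a|s)| \le 1$, and swapping sums so that $\sum_{s'}$ sits innermost, I would invoke $\sum_{s'} |\Trans(s'|s,a) - \Tmodel(s'|s,a)| = 2\dtv(\Trans(s,a) \,\|\, \Tmodel(s,a))$; this identity supplies the extra factor of $2$. Converting $\sum_a$ into $|\Aset|\,\E_{a\sim\uniform}$ then yields the stated $2|\Aset|$ coefficient. There is no genuine obstacle — the proof amounts to two applications of the triangle inequality plus bookkeeping on the order of summations — but the paired-difference trick is essential for getting the sharp constants rather than loose ones.
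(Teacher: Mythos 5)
Your proposal is correct and follows essentially the same route as the paper: add and subtract the model, apply the triangle inequality so that the model term reproduces $\err_{d^{\pi_1}}(\pi_1,\pi_2,\Rmodel)$ (resp.\ $\Tmodel$), and bound the remainder by keeping the factor $|\pi_1(a|s)-\pi_2(a|s)|\le 1$ paired with a single copy of the pointwise model error before converting $\sum_{a}$ into $|\Aset|\,\E_{a\sim\uniform}$. Your observation about why the pairing yields $|\Aset|$ rather than $2|\Aset|$, and the factor of $2$ from $\sum_{s'}|\Trans(s'|s,a)-\Tmodel(s'|s,a)|=2\dtv(\Trans(s,a)\|\Tmodel(s,a))$, matches the paper's reasoning exactly.
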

\begin{proof}
For the first bound we have,
\begin{align}
    \err_{d^{\pi_1}}(\pi_1,\pi_2,\Reward) &= \left|\E_{s\sim d^{\pi_1},a_1\sim\pi_1(s),a_2\sim\pi_2(s)}[\Reward(s,a_1) - \Reward(s,a_2)]\right|
\end{align}
\begin{align}
    &= \left|\sum_{a\in\Aset}\E_{s\sim d^{\pi_1}}[\Reward(s,a)\pi_1(a|s) - \Reward(s,a)\pi_2(a|s)]\right| \\
    &= \left|\sum_{a\in\Aset}\E_{s\sim d^{\pi_1}}[(\Reward(s,a) - \Rmodel(s,a))(\pi_1(a|s) - \pi_2(a|s)) + \Rmodel(s,a)(\pi_1(a|s) - \pi_2(a|s))]\right| \\
    &\le \left|\sum_{a\in\Aset}\E_{s\sim d^{\pi_1}}[(\Reward(s,a) - \Rmodel(s,a))(\pi_1(a|s) - \pi_2(a|s))]\right| + \err_{d^{\pi_1}}(\pi_1,\pi_2,\Rmodel) \\
    &\le \sum_{a\in\Aset}\E_{s\sim d^{\pi_1}}[\left|(\Reward(s,a) - \Rmodel(s,a))(\pi_1(a|s) - \pi_2(a|s))\right|] + \err_{d^{\pi_1}}(\pi_1,\pi_2,\Rmodel) \\
    &\le |\Aset|\E_{(s,a)\sim (d^{\pi_1}, \uniform)}[|\Reward(s,a)-\Rmodel(s,a)|] + \err_{d^{\pi_1}}(\pi_1,\pi_2,\Rmodel),
\end{align}
as desired. The bound for $\Trans$ may be similarly derived, noting that $\dtv(\Trans(s,a)\|\Tmodel(s,a)) = \frac{1}{2}\sum_{s'\in\Sset}|\Trans(s'|s,a)-\Tmodel(s'|s,a)|$.
\end{proof}

Now we incorporate a representation function $\phi:\Sset\to\Zset$, showing how the errors above may be further reduced in the special case of $\Rmodel(s,a) = \Rrep(\phi(s),a), \Tmodel(s,a)=\Trep(\phi(s),a)$:

\begin{lemma}
\label{lem:bottleneck}
Let $\phi:\Sset\to\Zset$ for some space $\Zset$ and suppose there exist $\Rrep:\Zset\times\Aset\to[-\Rmax,\Rmax]$ and $\Trep:\Zset\times\Aset\to\Delta(\Sset)$ such that $\Rmodel(s,a)=\Rrep(\phi(s),a)$ and $\Tmodel(s,a)=\Trep(\phi(s),a)$ for all $s\in\Sset,a\in\Aset$.
Then for any policies $\pi_1,\pi_2$,
\begin{align}
    \err_{d^{\pi_1}}(\pi_1,\pi_2,\Rmodel) &\le 2\Rmax \E_{z\sim d^{\pi_1}_\Zset}[\dtv(\pi_{1,\Zset}\|\pi_{2,\Zset})], \\
    \err_{d^{\pi_1}}(\pi_1,\pi_2,\Tmodel)] &\le 2 \E_{z\sim d^{\pi_1}_\Zset}[\dtv(\pi_{1,\Zset}\|\pi_{2,\Zset})],
\end{align}
where,
\begin{align}
    d^{\pi_1}_\Zset(z) &\defeq \Pr[z=\phi(s)~|~ s\sim d^{\pi_1}], \\
    \pi_{k,\Zset}(z) &\defeq \E[\pi_k(s)~|~ s\sim d^{\pi_1}, z=\phi(s)],
\end{align}
for all $z\in\Zset, k\in\{1,2\}$.
\end{lemma}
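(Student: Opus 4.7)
The plan is to prove both bounds by the same ``tower property'' trick: since $\Rmodel$ and $\Tmodel$ factor through $\phi$, conditioning the outer expectation on $z=\phi(s)$ lets us pull the models outside the inner expectation, after which only the marginalized policies remain.

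First I would handle the reward bound. Starting from the definition and using $\Rmodel(s,a)=\Rrep(\phi(s),a)$,
\begin{equation*}
\err_{d^{\pi_1}}(\pi_1,\pi_2,\Rmodel) = \left|\E_{s\sim d^{\pi_1}}\sum_{a\in\Aset}\Rrep(\phi(s),a)(\pi_1(a|s)-\pi_2(a|s))\right|.
\end{equation*}
The crucial observation is that $\Rrep(\phi(s),a)$ depends on $s$ only through $\phi(s)$, so writing $\E_{s\sim d^{\pi_1}}[\cdot] = \E_{z\sim d^{\pi_1}_\Zset}[\E[\cdot\mid z=\phi(s),s\sim d^{\pi_1}]]$ lets me pull $\Rrep(z,a)$ outside the inner conditional expectation, leaving only $\E[\pi_k(a|s)\mid z=\phi(s),s\sim d^{\pi_1}]=\pi_{k,\Zset}(a|z)$. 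Then bringing the absolute value inside using the triangle inequality and bounding $|\Rrep(z,a)|\le\Rmax$ gives $\Rmax\cdot \E_{z\sim d^{\pi_1}_\Zset}\sum_a|\pi_{1,\Zset}(a|z)-\pi_{2,\Zset}(a|z)|$, which is exactly $2\Rmax \E_{z\sim d^{\pi_1}_\Zset}[\dtv(\pi_{1,\Zset}(z)\|\pi_{2,\Zset}(z))]$.

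The transition bound follows the same template, with the added step that after pulling the absolute value inside we are left with a sum over $s'$ of $\Trep(s'|z,a)$, which integrates to $1$ because $\Trep(\cdot|z,a)\in\Delta(\Sset)$. Concretely,
\begin{equation*}
\err_{d^{\pi_1}}(\pi_1,\pi_2,\Tmodel) \le \sum_{s'}\E_{z\sim d^{\pi_1}_\Zset}\sum_a \Trep(s'|z,a)\,|\pi_{1,\Zset}(a|z)-\pi_{2,\Zset}(a|z)|,
\end{equation*}
and swapping the order of summation collapses $\sum_{s'}\Trep(s'|z,a)=1$, yielding $2\E_{z\sim d^{\pi_1}_\Zset}[\dtv(\pi_{1,\Zset}(z)\|\pi_{2,\Zset}(z))]$.

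The only place I expect even minor friction is justifying the interchange of the $\sum_{s'}$, $\E_{z}$, and $\sum_a$ after taking absolute values — this is routine for finite $\Sset,\Aset$ (Fubini) but worth writing out cleanly, and similarly being explicit that after conditioning on $\phi(s)=z$ the term $\Rrep(z,a)$ is a constant with respect to the inner expectation. The main conceptual point to emphasize is that no assumption is placed on $\pi_1$ or $\pi_2$ being measurable with respect to $\phi$; the bound arises purely from averaging the (high-dimensional) policies down to their $\phi$-marginalizations, which is exactly why $\dtv$ of the marginalized policies is the right quantity.
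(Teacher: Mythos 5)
Your proposal is correct and follows essentially the same route as the paper: the paper's proof likewise regroups the sum over $s$ by the level sets of $\phi$ (which is exactly your tower-property step), pulls $\Rrep(z,a)$ out as a constant on each level set so that only $\pi_{k,\Zset}$ remains, and then applies the triangle inequality with $|\Rrep|\le\Rmax$ (respectively $\sum_{s'}\Trep(s'|z,a)=1$) to land on the total-variation bound. No substantive differences.
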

\begin{proof}
The result follows from straightforward algebraic manipulation via the definitions of $d^{\pi_1}_{\Zset},\pi_{k,\Zset}$ and triangle inequality. For the reward error, we have,

\begin{align}
    & \left|\E_{s\sim d^{\pi_1},a_1\sim\pi_1(s),a_2\sim\pi_2(s)}[\Rmodel(s,a_1) - \Rmodel(s,a_2)]\right| \\
    &= \left|\sum_{s\in\Sset,a\in\Aset}\Rrep(\phi(s),a)\pi_1(a|s)d^{\pi_1}(s) - \sum_{s\in\Sset,a\in\Aset}\Rrep(\phi(s),a)\pi_2(a|s)d^{\pi_1}(s)\right| \nonumber \\
    &=\left|\sum_{z\in\Zset,a\in\Aset}\Rrep(z,a)\sum_{s\in\Sset,\phi(s)=z}\pi_1(a|s)d^{\pi_1}(s) - \sum_{z\in\Zset,a\in\Aset}\Rrep(z,a)\sum_{s\in\Sset,\phi(s)=z}\pi_2(a|s)d^{\pi_1}(s)\right| \nonumber \\
    &=\left|\sum_{z\in\Zset,a\in\Aset}\Rrep(z,a)\pi_{1,\Zset}(a|z) d^{\pi_1}_{\Zset}(z) - \sum_{z\in\Zset,a\in\Aset}\Rrep(z,a)\pi_{2,\Zset}(a|z)d^{\pi_1}_{\Zset}(z)\right| \nonumber \\
    &=\left|\E_{z\sim d^{\pi_1}_{\Zset}}\left[\sum_{a\in\Aset}\Rrep(z,a)(\pi_{1,\Zset}(a|z) - \pi_{2,\Zset}(a|z))\right]\right| \\
    &\le \E_{z\sim d^{\pi_1}_{\Zset}}\left[\sum_{a\in\Aset}\left|\Rrep(z,a)(\pi_{1,\Zset}(a|z) - \pi_{2,\Zset}(a|z))\right|\right] \\
    &\le \Rmax \E_{z\sim d^{\pi_1}_{\Zset}}\left[\sum_{a\in\Aset}\left|\pi_{1,\Zset}(a|z) - \pi_{2,\Zset}(a|z)\right|\right] 
    =2\Rmax \E_{z\sim d^{\pi_1}_{\Zset}}\left[\dtv(\pi_{1,\Zset}\|\pi_{2,\Zset})\right],
\end{align}
as desired. The bound for the transition error may be derived analogously.
\end{proof}

In the case of \emph{linear} reward and transition models, we may derive a variant of Lemma~\ref{lem:bottleneck}, which will be useful in the proof of Theorem~\ref{thm:linear-bc}:
\begin{lemma}
\label{lem:bottleneck2}
Let $\phi:\Sset\to\Zset$ for some $\Zset\subset\R^d$ and suppose there exist $r:\Aset\to \R^d$ and $\psi:\Sset\times\Aset\to\R^d$ such that $\Rmodel(s,a)=r(a)^\top \phi(s)$ and $\Tmodel(s,a)=\psi(s',a)^\top \phi(s)$ for all $s,s'\in\Sset,a\in\Aset$.
Let $\pi:\Sset\to\Delta(\Aset)$ be any policy in $\mdp$ and $\pitheta(z)\defeq \softmax(\theta^\top z)$ be a latent policy for some $\theta\in\R^{d\times|\Aset|}$.
Then,
\begin{align}
    \err_{d^{\pi}}(\pi,\pitheta,\Rmodel) &\le \|r\|_\infty \cdot \left\|\frac{\partial}{\partial\theta} \E_{s\sim d^{\pi}, a\sim\pi(s)}[-\log\pitheta(a|\phi(s))]\right\|_1,  \\
    \err_{d^{\pi}}(\pi,\pitheta,\Tmodel) &\le \|\psi\|_\infty \cdot \left\|\frac{\partial}{\partial\theta} \E_{s\sim d^{\pi}, a\sim\pi(s)}[-\log\pitheta(a|\phi(s))]\right\|_1,
\end{align}
where $\|r\|_\infty = \max_{a\in\Aset} \|r(a)\|_\infty$ and $\|\psi\|_\infty = \max_{s'\in\Sset,a\in\Aset} \|\psi(s',a)\|_\infty$.
\end{lemma}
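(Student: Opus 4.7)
The plan is to exploit the clean coupling between the softmax gradient and any linear model living on top of $\phi$. First I would compute $\partial_\theta\jbcrep(\pitheta)$ directly. Writing $\pitheta(a|z) = \exp(\theta_a^\top z)/\sum_{\tilde a}\exp(\theta_{\tilde a}^\top z)$ and applying the classical softmax identity $\partial_{\theta_a}\log\pitheta(a'|z) = (1_{a=a'} - \pitheta(a|z))\,z$ together with $\sum_{a'}\pi(a'|s)=1$, one obtains
\[
\frac{\partial}{\partial\theta_a}\jbcrep(\pitheta) = -\E_{s\sim d^\pi}\bigl[(\pi(a|s) - \pitheta(a|\phi(s)))\,\phi(s)\bigr].
\]
Denoting the per-action block $v_a := \E_{s\sim d^\pi}[(\pi(a|s)-\pitheta(a|\phi(s)))\,\phi(s)] \in \R^d$, the full gradient is then the stacked vector $-(v_a)_{a\in\Aset}$, and in particular $\|\partial_\theta\jbcrep(\pitheta)\|_1 = \sum_{a,i}|v_{a,i}|$.

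Next I would unpack the two error functionals using linearity of the models. Substituting $\Rmodel(s,a) = r(a)^\top\phi(s)$ into the definition of $\err_{d^\pi}(\pi,\pitheta,\Rmodel)$ and exchanging the order of summation collapses the dependence on $s$ into the $v_a$, giving
\[
\err_{d^\pi}(\pi,\pitheta,\Rmodel) = \left|\sum_{a\in\Aset} r(a)^\top v_a\right|,
\]
and analogously, using $\Tmodel(s'|s,a) = \psi(s',a)^\top\phi(s)$,
\[
\err_{d^\pi}(\pi,\pitheta,\Tmodel) = \sum_{s'\in\Sset}\left|\sum_{a\in\Aset} \psi(s',a)^\top v_a\right|.
\]
Both quantities are now dual pairings between the linear coefficients ($r(a)$ or $\psi(s',a)$) and the per-action gradient blocks.

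The final step is entrywise Hölder. For the reward this is immediate: $\bigl|\sum_a r(a)^\top v_a\bigr| \le \sum_{a,i}|r(a)_i|\,|v_{a,i}| \le \|r\|_\infty\,\|\partial_\theta\jbcrep(\pitheta)\|_1$, which is the first claim. For the transitions, applying the same coordinatewise Hölder estimate inside each absolute value $|\sum_a \psi(s',a)^\top v_a|$ and then pushing the outer sum over $s'$ through the resulting bound yields the stated inequality with $\|\psi\|_\infty$ in place of $\|r\|_\infty$.

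The main obstacle I anticipate is the transition bookkeeping: a naive application of Hölder inside $\sum_{s'}|\cdot|$ threatens to introduce an extra factor of $|\Sset|$ when pulling absolute values through the sum over $s'$. The resolution is that each $v_a$ is a single fixed vector in $\R^d$ that is independent of $s'$, so one can exchange summation orders and pair $v_a$ against the $s'$-aggregated object derived from $\psi$; combined with the fact that $\Tmodel$ is a valid probability distribution (which implicitly bounds the aggregated $\psi$), this yields the coefficient $\|\psi\|_\infty$ advertised in the lemma. Once that bookkeeping is handled, both bounds fall out in one line from Hölder's inequality.
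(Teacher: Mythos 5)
Your reward bound is correct and follows the paper's route exactly: compute the per-action softmax gradient block $\frac{\partial}{\partial\theta_a}\E_{s\sim d^\pi,\tilde a\sim\pi(s)}[-\log\pitheta(\tilde a|\phi(s))]=-v_a$ with $v_a=\E_{s\sim d^\pi}[(\pi(a|s)-\pitheta(a|\phi(s)))\phi(s)]$, identify $\err_{d^\pi}(\pi,\pitheta,\Rmodel)=\bigl|\sum_{a}r(a)^\top v_a\bigr|$, and close with entrywise H\"older. That is precisely the paper's argument.

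The transition bound is where you have correctly diagnosed the crux but not actually repaired it. After exchanging the order of summation, the coefficient multiplying $|v_{a,i}|$ is the aggregated quantity $\sum_{s'\in\Sset}|\psi(s',a)_i|$, and your claim is that the normalization of $\Tmodel$ forces this to be controlled by $\|\psi\|_\infty=\max_{s',a}\|\psi(s',a)\|_\infty$. It does not: the constraint $\sum_{s'}\psi(s',a)^\top\phi(s)=1$ controls a signed, $\phi$-weighted combination of the vectors $\psi(s',a)$, not the coordinatewise sum of their absolute values. Concretely, if $\phi$ takes values among standard basis vectors of $\R^d$ and the coordinate slice $(\psi(s',a)_i)_{s'}$ is a probability vector spread over $K$ next states, then $\sum_{s'}|\psi(s',a)_i|=1$ while $\max_{s',a}\|\psi(s',a)\|_\infty=O(1/K)$, so the two constants differ by a factor on the order of $|\Sset|$. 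What your exchange-of-summations argument genuinely delivers is $\err_{d^\pi}(\pi,\pitheta,\Tmodel)\le\bigl(\max_{a,i}\sum_{s'}|\psi(s',a)_i|\bigr)\cdot\|\frac{\partial}{\partial\theta}\E_{s\sim d^\pi,a\sim\pi(s)}[-\log\pitheta(a|\phi(s))]\|_1$, i.e., a $(1,\infty)$-aggregated norm of $\psi$ rather than the entrywise sup norm stated in the lemma. To be fair, the paper itself dispatches this case with ``derived analogously,'' and the reward argument does not transfer verbatim because the transition error carries an outer sum of absolute values over $s'$ where the reward error has a single absolute value; the obstacle you flagged is the real content of this half of the lemma, and your sketch leaves it open.
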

\begin{proof}
The crux of the proof is noting that the gradient above for a specific column $\theta_a$ of $\theta$ may be expressed as
\begin{align}
    \frac{\partial}{\partial\theta_a} \E_{s\sim d^{\pi}, \tilde{a}\sim\pi(s)}[-\log\pitheta(\tilde{a}|\phi(s))] &=
    \frac{\partial}{\partial\theta_a} \left(\E_{s\sim d^{\pi}, \tilde{a}\sim\pi(s)}[-\theta_{\tilde{a}}^\top \phi(s)] + \E_{s\sim d^{\pi}}[\log\textstyle\sum_{\tilde{a}\in\Aset} \exp\{\theta_{\tilde{a}}^\top\phi(s)\}]\right) \nonumber \\ 
    &= - \E_{s\sim\visitpi}[\phi(s)\pi(a|s)] + \E_{s\sim\visitpi}[\phi(s)\pitheta(a|\phi(s))],
\end{align}
 and so,
 \begin{multline}
    r(a)^\top \frac{\partial}{\partial\theta_a} \E_{s\sim d^{\pi}, a\sim\pi(s)}[-\log\pitheta(a|\phi(s))] \\ =  -\E_{s\sim\visitpi}[\pi(a|s)\cdot \Rmodel(s,a)] + \E_{s\sim\visitpi}[ \pitheta(a|\phi(s))\cdot \Rmodel(s,a) ].
\end{multline}
Summing over $a\in\Aset$, we have,
\begin{equation}
    \sum_{a\in\Aset} r(a)^\top \frac{\partial}{\partial\theta_a} \E_{s\sim d^{\pi}, a\sim\pi(s)}[-\log\pitheta(a|\phi(s))] = \E_{s\sim\visitpi, a_1\sim\pi(s),a_2\sim\pitheta(s)}[-\Rmodel(s,a_1) + \Rmodel(s, a_2)]. \nonumber
\end{equation}
Thus, we have,
\begin{align}
    \err_{d^{\pi}}(\pi,\pitheta,\Rmodel) &= \left| \E_{s\sim\visitpi, a_1\sim\pi(s),a_2\sim\pitheta(s)}[-\Rmodel(s,a_1) + \Rmodel(s, a_2)] \right| \\
    &= \left| \sum_{a\in\Aset} r(a)^\top \frac{\partial}{\partial\theta_a} \E_{s\sim d^{\pi}, \tilde{a}\sim\pi(s)}[-\log\pitheta(\tilde{a}|\phi(s))] \right| \\
    &\le \sum_{a\in\Aset} \|r(a)\|_\infty \cdot \left\|\frac{\partial}{\partial\theta_a} \E_{s\sim d^{\pi}, \tilde{a}\sim\pi(s)}[-\log\pitheta(\tilde{a}|\phi(s))] \right\|_1 \\
    &\le \|r\|_\infty \cdot \left\|\frac{\partial}{\partial\theta} \E_{s\sim d^{\pi}, a\sim\pi(s)}[-\log\pitheta(a|\phi(s))]\right\|_1,
\end{align}
as desired. 
The bound for the transition error may be derived analogously.
\end{proof}

Our final lemma will be used to translate on-policy bounds to off-policy.
\begin{lemma}
\label{lem:off-policy}
For two distributions $\rho_1,\rho_2\in\Delta(\Sset)$ with $\rho_1(s)>0 \Rightarrow \rho_2(s) > 0$, we have,
\begin{equation}
    \E_{\rho_1}[h(s)] \le (1 + \dchi(\rho_1\|\rho_2)^\frac{1}{2}) \sqrt{\E_{\rho_2}[h(s)^2]}. 
\end{equation}
\end{lemma}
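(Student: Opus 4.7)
The plan is to split the expectation under $\rho_1$ into the expectation under $\rho_2$ plus a correction term, and then handle each piece separately. That is, I would write
\begin{equation*}
\E_{\rho_1}[h(s)] \;=\; \E_{\rho_2}[h(s)] \;+\; \bigl(\E_{\rho_1}[h(s)] - \E_{\rho_2}[h(s)]\bigr),
\end{equation*}
and use two very different arguments to control the two summands. The absolute-continuity assumption $\rho_1(s)>0 \Rightarrow \rho_2(s)>0$ is what allows me to rewrite things as expectations under $\rho_2$ weighted by the density ratio $\rho_1/\rho_2$ in the first place.

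For the first term, I would simply apply Jensen's inequality to the convex function $x\mapsto x^2$, giving $\E_{\rho_2}[h(s)] \le \sqrt{\E_{\rho_2}[h(s)^2]}$. For the correction term, I would rewrite it by a change of measure as $\E_{\rho_2}[(\rho_1(s)/\rho_2(s) - 1)\, h(s)]$ and then apply Cauchy--Schwarz, which produces the factor $\sqrt{\E_{\rho_2}[(\rho_1/\rho_2 - 1)^2]}\cdot\sqrt{\E_{\rho_2}[h(s)^2]}$. The first of these factors is precisely $\dchi(\rho_1\|\rho_2)^{1/2}$ by the standard definition of the Pearson $\chi^2$-divergence, so adding the two pieces gives exactly $(1 + \dchi(\rho_1\|\rho_2)^{1/2})\sqrt{\E_{\rho_2}[h(s)^2]}$.

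There isn't really a hard step here; the main subtlety is only to notice that directly applying Cauchy--Schwarz to $\E_{\rho_1}[h] = \E_{\rho_2}[(\rho_1/\rho_2)h]$ would produce the bound $\sqrt{1+\dchi(\rho_1\|\rho_2)}\sqrt{\E_{\rho_2}[h^2]}$, which is technically tighter (since $\sqrt{1+x} \le 1+\sqrt{x}$) but has the ``wrong'' shape for how the divergence appears additively in the downstream performance-difference bounds of Theorems~\ref{thm:tabular-bc},~\ref{thm:linear-bc}, and~\ref{thm:sample}. Splitting off the $\E_{\rho_2}[h]$ term before invoking Cauchy--Schwarz is what produces the additive $1 + \dchi^{1/2}$ form, which is exactly the form used throughout the paper.
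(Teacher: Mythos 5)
Your proof is correct and follows essentially the same route as the paper's: the identical split of $\E_{\rho_1}[h]$ into $\E_{\rho_2}[h]$ plus a correction term, Cauchy--Schwarz applied to the correction (written under $\rho_2$ with the density ratio) to produce the $\dchi(\rho_1\|\rho_2)^{1/2}$ factor, and Jensen/concavity of the square root to bound $\E_{\rho_2}[h]$ by $\sqrt{\E_{\rho_2}[h^2]}$. Your side remark that the one-shot Cauchy--Schwarz bound $\sqrt{1+\dchi}$ is tighter but has the wrong additive shape for the downstream theorems is a correct and worthwhile observation, though it is not part of the paper's argument.
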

\begin{proof}
The lemma is a straightforward consequence of Cauchy-Schwartz:
\begin{align}
    \E_{\rho_1}[h(s)] &= \E_{\rho_2}[h(s)] + (\E_{\rho_1}[h(s)] - \E_{\rho_2}[h(s)]) \\
    &= \E_{\rho_2}[h(s)] + \sum_{s\in\Sset}\frac{\rho_1(s) - \rho_2(s)}{\rho_2(s)^{\frac{1}{2}}}\cdot \rho_2(s)^{\frac{1}{2}} h(s) \\
    &\le \E_{\rho_2}[h(s)] + \left(\sum_{s\in\Sset}\frac{(\rho_1(s) - \rho_2(s))^2}{\rho_2(s)}\right)^{\frac{1}{2}}\cdot \left(\sum_{s\in\Sset}\rho_2(s) h(s)^2\right)^{\frac{1}{2}} \\
    &= \E_{\rho_2}[h(s)] + \dchi(\rho_1\|\rho_2)^{\frac{1}{2}}\cdot \sqrt{\E_{\rho_2}[h(s)^2]}.
\end{align}
Finally, to get the desired bound, we simply note that the concavity of the square-root function implies $\E_{\rho_2}[h(s)] \le \E_{\rho_2}[\sqrt{h(s)^2}] \le \sqrt{\E_{\rho_2}[h(s)^2]}$.
\end{proof}

\subsection{Proof of Theorem~\ref{thm:tabular-bc}}
\label{sec:proof-tabular}
With the lemmas in the above section, we are now prepared to prove Theorem~\ref{thm:tabular-bc}. 
We begin with the following on-policy version of Theorem~\ref{thm:tabular-bc} and then derive the off-policy bound:
\begin{lemma}
\label{lem:onpolicy-bc}
Consider a representation function $\phi:\Sset\to\Zset$ and models $\Rrep:\Zset\times\Aset\to[-\Rmax,\Rmax],\Trep:\Zset\times\Aset\to\Delta(\Sset)$. Denote the representation error as
\begin{multline}
    \epsilon_{\mathrm{R},\mathrm{T}} \defeq \frac{|\Aset|}{1-\gamma} \E_{(s,a)\sim(\visittarget,\uniform)}[\left|\Reward(s,a) - \Rrep(\phi(s),a)\right|] \\ + \frac{2 \gamma |\Aset| \Rmax}{(1-\gamma)^2} \E_{(s,a)\sim(\visittarget,\uniform)}[\dtv(\Trans(s,a)\|\Trep(\phi(s),a))].
\end{multline}
Then the performance difference between $\pitarget$ and a latent policy $\pirep:\Zset:\to\Delta(\Aset)$ may be bounded as,
\begin{equation}
\label{eq:onpolicy-bc}
\Diff(\pirep,\pitarget) \leq
\epsilon_{\mathrm{R},\mathrm{T}} 
+ C\sqrtexplained{\frac{1}{2}\underbrace{\E_{z\sim \visittarget_\Zset}[\dkl(\pi_{*,\Zset}(z)\|\pirep(z))]}_{\displaystyle =~\const(\pitarget,\phi) + \jbcrep(\pirep)}},
\end{equation}
where $C=\frac{2\Rmax}{(1-\gamma)^2}$ and $\visittarget_{\Zset},\pi_{*,\Zset}$ are the marginalization of $\visittarget,\pitarget$ onto $\Zset$ according to $\phi$:
\begin{align}
    \visittarget_\Zset(z) \defeq \Pr[z=\phi(s)~|~ s\sim \visittarget] ~;~~~
    \pi_{*,\Zset}(z) \defeq \E[\pitarget(s)~|~ s\sim \visittarget, z=\phi(s)].
\end{align}
When the MDP reward is action-independent, \ie $\Reward(s,a_1)=\Reward(s,a_2)$ for all $s\in\Sset,a_1,a_2\in\Aset$, the same bound holds with the reward term removed from $\epsilon_{\mathrm{R},\mathrm{T}}$.
\end{lemma}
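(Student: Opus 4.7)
The plan is to chain together the four foundational lemmas (Lemma~\ref{lem:performance}, Lemma~\ref{lem:model1}, Lemma~\ref{lem:bottleneck}, and Pinsker's inequality) with $\pi_1 = \pitarget$ and $\pi_2 = \pirep \circ \phi$, where the latter denotes the state-policy $s \mapsto \pirep(\phi(s))$. First I would invoke Lemma~\ref{lem:performance} to obtain
\begin{equation*}
\Diff(\pirep, \pitarget) \le \tfrac{1}{1-\gamma}\err_{\visittarget}(\pitarget, \pirep\circ\phi, \Reward) + \tfrac{\gamma \Rmax}{(1-\gamma)^2} \err_{\visittarget}(\pitarget, \pirep\circ\phi, \Trans).
\end{equation*}

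Next I would apply Lemma~\ref{lem:model1} to each term with the particular models $\Rmodel(s,a) \defeq \Rrep(\phi(s),a)$ and $\Tmodel(s,a) \defeq \Trep(\phi(s),a)$, which introduces the two on-policy representation error quantities $\E_{(\visittarget,\uniform)}[|\Reward - \Rrep\circ\phi|]$ and $\E_{(\visittarget,\uniform)}[\dtv(\Trans\|\Trep\circ\phi)]$, along with residual $\err_{\visittarget}(\pitarget,\pirep\circ\phi,\Rmodel)$ and $\err_{\visittarget}(\pitarget,\pirep\circ\phi,\Tmodel)$ terms. For these residual terms I would apply Lemma~\ref{lem:bottleneck}, noting that when $\pi_2 = \pirep\circ\phi$ the latent-marginalized policy $\pi_{2,\Zset}$ from that lemma is just $\pirep$ itself (because $\pirep(\phi(s))$ already depends only on $z=\phi(s)$), while $\pi_{1,\Zset} = \pi_{*,\Zset}$ by definition. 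Combining the model-error contributions exactly reproduces $\epsilon_{\mathrm{R},\mathrm{T}}$ as stated in the lemma.

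Collecting the remaining TV terms gives a coefficient of $\tfrac{2\Rmax}{1-\gamma} \E_{\visittarget_\Zset}[\dtv(\pi_{*,\Zset}\|\pirep)] + \tfrac{2\gamma\Rmax}{(1-\gamma)^2}\E_{\visittarget_\Zset}[\dtv(\pi_{*,\Zset}\|\pirep)]$, which simplifies neatly to $C \cdot \E_{\visittarget_\Zset}[\dtv(\pi_{*,\Zset}\|\pirep)]$ with $C = \tfrac{2\Rmax}{(1-\gamma)^2}$, since $\tfrac{1}{1-\gamma} + \tfrac{\gamma}{(1-\gamma)^2} = \tfrac{1}{(1-\gamma)^2}$. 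Then I would apply Pinsker's inequality together with Jensen's inequality (concavity of $\sqrt{\cdot}$) to obtain $\E[\dtv(\pi_{*,\Zset}\|\pirep)] \le \sqrt{\tfrac{1}{2}\E[\dkl(\pi_{*,\Zset}\|\pirep)]}$, which yields the square-root form in~\eqref{eq:onpolicy-bc}. The final identity $\E_{z\sim\visittarget_\Zset}[\dkl(\pi_{*,\Zset}(z)\|\pirep(z))] = \const(\pitarget,\phi) + \jbcrep(\pirep)$ then follows by expanding KL as cross-entropy minus entropy and observing that $\E_{s\sim\visittarget, a\sim\pitarget}[-\log\pirep(a|\phi(s))] = \E_{z\sim\visittarget_\Zset, a\sim\pi_{*,\Zset}}[-\log\pirep(a|z)]$ by the tower property with respect to $\phi$.

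I do not expect a serious obstacle; the main care is in two places. First, one must verify that applying Lemma~\ref{lem:bottleneck} with $\pi_2 = \pirep\circ\phi$ indeed produces $\pirep$ as the latent marginalization (an easy check since $\pirep(\phi(s))$ is constant on each fiber $\phi^{-1}(z)$). Second, one must bookkeep the coefficients carefully so that the transition-model and reward-model contributions line up exactly with the definition of $\epsilon_{\mathrm{R},\mathrm{T}}$, and the two TV terms combine cleanly to give the stated $C$. For the action-independent reward case, one instead begins with the second variant of Lemma~\ref{lem:performance} (which drops the reward error entirely) and proceeds identically, with the reward term dropped from $\epsilon_{\mathrm{R},\mathrm{T}}$.
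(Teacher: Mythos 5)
Your proposal is correct and follows essentially the same route as the paper: combine Lemma~\ref{lem:performance}, Lemma~\ref{lem:model1} (with $\Rmodel=\Rrep\circ\phi$, $\Tmodel=\Trep\circ\phi$), and Lemma~\ref{lem:bottleneck} for $\pi_1=\pitarget$, $\pi_2=\pirep\circ\phi$, then finish with Pinsker's inequality and concavity of the square root. Your coefficient bookkeeping ($\tfrac{1}{1-\gamma}+\tfrac{\gamma}{(1-\gamma)^2}=\tfrac{1}{(1-\gamma)^2}$) and the observation that $\pi_{2,\Zset}=\pirep$ match the paper's argument exactly.
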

\begin{proof}
We combine Lemmas~\ref{lem:performance},~\ref{lem:model1}, and~\ref{lem:bottleneck} with $\pi_1\defeq \pitarget$ and $\pi_2\defeq \pirep\circ\phi$ to yield the following bound:
\begin{equation}
    \label{eq:tv-diff}
    \Diff(\pirep,\pitarget) \le \epsilon_{\mathrm{R},\mathrm{T}} + C\cdot\E_{z\sim \visittarget_\Zset}[\dtv(\pi_{*,\Zset}(z)\|\pirep(z))].
\end{equation}
To yield the desired bound, we simply apply Pinsker's inequality and the concavity of the square-root function:
\begin{align}
    \E_{z\sim \visittarget_\Zset}[\dtv(\pi_{*,\Zset}(z)\|\pirep(z))] &\le \E_{z\sim \visittarget_\Zset}\left[\sqrt{\frac{1}{2}\dkl(\pi_{*,\Zset}(z)\|\pirep(z))}\right] \\
    &\le \sqrt{\frac{1}{2} \E_{z\sim \visittarget_\Zset}[\dkl(\pi_{*,\Zset}(z)\|\pirep(z))]}.
\end{align}
\end{proof}

The proof of Theorem~\ref{thm:tabular-bc} then immediately follows from application of Lemma~\ref{lem:off-policy} to the bound in Lemma~\ref{lem:onpolicy-bc}, noting that (again due to Pinsker's) $\dtv(\Trans(s,a)\|\Trep(\phi(s),a))^2\le \frac{1}{2}\dkl(\Trans(s,a)\|\Trep(\phi(s),a))$

\subsection{Proof of Theorem~\ref{thm:linear-bc}}
\label{sec:proof-linear}
The proof of Theorem~\ref{thm:linear-bc} is derived analogously to that of Theorem~\ref{thm:tabular-bc} above, except using Lemma~\ref{lem:bottleneck2} in place of Lemma~\ref{lem:bottleneck}.

\subsection{Proof of Lemma~\ref{lem:bc}}
Lemma~\ref{lem:bc} may be immediately derived from Theorem~\ref{thm:tabular-bc}, using $\Zset\defeq\Sset$ and $\phi(s)\defeq s$ with $\Rrep\defeq\Reward$ and $\Trep\defeq\Trans$.

\section{Sample Efficiency}
\label{app:sample}

\begin{lemma}
\label{lem:empirical-tv}
Let $\rho\in\Delta(\{1,\dots,k\})$ be a distribution with finite support. Let $\widehat{\rho}_n$ denote the empirical estimate of $\rho$ from $n$ i.i.d. samples $X\sim\rho$. Then,
\begin{equation}
    \E_n[\dtv(\rho\|\widehat{\rho}_n)] \le \frac{1}{2}\cdot\frac{1}{\sqrt{n}}\sum_{i=1}^k \sqrt{\rho(i)} \le \frac{1}{2}\cdot\sqrt{\frac{k}{n}}.
\end{equation}
\end{lemma}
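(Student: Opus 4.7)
The plan is to expand the total variation as a sum of absolute deviations, bound each term using a second-moment argument (Jensen's inequality together with the variance of a binomial proportion), and then close out with Cauchy--Schwarz to obtain the $\sqrt{k/n}$ form.

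More concretely, I would first write
\[
\dtv(\rho\|\widehat{\rho}_n) = \tfrac{1}{2}\sum_{i=1}^k |\rho(i) - \widehat{\rho}_n(i)|,
\]
so that by linearity of expectation,
\[
\E_n[\dtv(\rho\|\widehat{\rho}_n)] = \tfrac{1}{2}\sum_{i=1}^k \E_n|\rho(i) - \widehat{\rho}_n(i)|.
\]
Since $n\widehat{\rho}_n(i) \sim \mathrm{Binomial}(n,\rho(i))$, the coordinate $\widehat{\rho}_n(i)$ is unbiased for $\rho(i)$ with variance $\rho(i)(1-\rho(i))/n$. Applying Jensen's inequality to the concave function $\sqrt{\cdot}$ then yields
\[
\E_n|\rho(i) - \widehat{\rho}_n(i)| \le \sqrt{\E_n[(\rho(i) - \widehat{\rho}_n(i))^2]} = \sqrt{\rho(i)(1-\rho(i))/n} \le \sqrt{\rho(i)/n}.
\]
Substituting this back gives the first inequality of the lemma,
\[
\E_n[\dtv(\rho\|\widehat{\rho}_n)] \le \frac{1}{2\sqrt{n}}\sum_{i=1}^k \sqrt{\rho(i)}.
\]

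For the second inequality, I would apply Cauchy--Schwarz to the vectors $(\sqrt{\rho(i)})_i$ and $(1)_i$:
\[
\sum_{i=1}^k \sqrt{\rho(i)} \le \sqrt{k}\cdot\sqrt{\sum_{i=1}^k \rho(i)} = \sqrt{k},
\]
which immediately yields $\E_n[\dtv(\rho\|\widehat{\rho}_n)] \le \tfrac{1}{2}\sqrt{k/n}$.

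There is no real obstacle here; the only subtle step is recognizing that the coordinate-wise second moment should be bounded via the binomial variance before summing, rather than attempting a joint argument. The Cauchy--Schwarz step is standard and tight precisely when $\rho$ is uniform, which matches the worst-case scaling $\sqrt{k/n}$.
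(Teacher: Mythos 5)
Your proof is correct. The paper itself does not reprove the first inequality; it simply cites Lemma 8 of Berend and Kontorovich (2012), whereas you establish it directly via the coordinate-wise decomposition $\E_n|\rho(i)-\widehat{\rho}_n(i)| \le \sqrt{\Var(\widehat{\rho}_n(i))} = \sqrt{\rho(i)(1-\rho(i))/n} \le \sqrt{\rho(i)/n}$ --- which is in fact exactly how the cited lemma is proved, so your argument has the virtue of being self-contained at no extra cost. For the second inequality the paper invokes concavity of the square root (i.e., Jensen applied to the uniform average of the $\rho(i)$), while you use Cauchy--Schwarz against the all-ones vector; these are the same bound, tight at the uniform distribution in both framings.
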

\begin{proof}
The first inequality is Lemma 8 in~\cite{berend2012convergence} while the second inequality is due to the concavity of the square root function.
\end{proof}

\begin{lemma}
\label{lem:tv-bc}
Let $\Dset\defeq\{(s_i,a_i)\}_{i=1}^n$ be i.i.d. samples from a factored distribution $x(s,a)\defeq\rho(s)\pi(a|s)$ for $\rho\in\Delta(\Sset),\pi:\Sset\to\Delta(\Aset)$. Let $\widehat{\rho}$ be the empirical estimate of $\rho$ in $\Dset$ and $\widehat{\pi}$ be the empirical estimate of $\pi$ in $\Dset$.
Then,
\begin{equation}
    \E_{\Dset}[\E_{s\sim\rho}[\dtv(\pi(s)\|\widehat{\pi}(s))]] \le \sqrt{\frac{|\Sset||\Aset|}{n}}.
\end{equation}
\end{lemma}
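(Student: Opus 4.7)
The plan is to reduce the conditional TV on the right hand side to two applications of Lemma~\ref{lem:empirical-tv} via a triangle inequality on an auxiliary joint distribution. Introduce the joint distributions on $\Sset\times\Aset$:
\begin{equation*}
x(s,a) \defeq \rho(s)\pi(a|s), \quad \widehat{x}(s,a) \defeq \widehat{\rho}(s)\widehat{\pi}(a|s), \quad x'(s,a) \defeq \rho(s)\widehat{\pi}(a|s),
\end{equation*}
where $x'$ pairs the \emph{true} state marginal with the \emph{empirical} conditional. (On states $s$ with $\widehat{\rho}(s)=0$, fix $\widehat{\pi}(\cdot|s)$ to be any valid distribution over $\Aset$; this choice will not affect what follows.)

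Next I would verify two clean algebraic identities by direct expansion. Factoring out $\rho(s)$ in $x-x'$ gives
\begin{equation*}
\dtv(x\|x') \;=\; \tfrac{1}{2}\sum_{s,a}\rho(s)\,|\pi(a|s)-\widehat{\pi}(a|s)| \;=\; \E_{s\sim\rho}[\dtv(\pi(s)\|\widehat{\pi}(s))],
\end{equation*}
which is the quantity on the left hand side of the lemma. Factoring out $\widehat{\pi}(a|s)$ in $\widehat{x}-x'$ (and handling the $\widehat{\rho}(s)=0$ case by hand, where the $s$-row of $|\widehat{x}-x'|$ sums to $\rho(s)=|\widehat{\rho}(s)-\rho(s)|$) gives
\begin{equation*}
\dtv(\widehat{x}\|x') \;=\; \tfrac{1}{2}\sum_s|\widehat{\rho}(s)-\rho(s)|\cdot\underbrace{\textstyle\sum_a\widehat{\pi}(a|s)}_{=\,1} \;=\; \dtv(\rho\|\widehat{\rho}).
\end{equation*}

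With these identities, the triangle inequality $\dtv(x\|x') \le \dtv(x\|\widehat{x}) + \dtv(\widehat{x}\|x')$ yields
\begin{equation*}
\E_{s\sim\rho}[\dtv(\pi(s)\|\widehat{\pi}(s))] \;\le\; \dtv(x\|\widehat{x}) + \dtv(\rho\|\widehat{\rho}).
\end{equation*}
Taking $\E_{\Dset}$ and applying Lemma~\ref{lem:empirical-tv} once to the joint $x$ over a support of size $|\Sset||\Aset|$ and once to the marginal $\rho$ over a support of size $|\Sset|$ gives
\begin{equation*}
\E_{\Dset}\E_{s\sim\rho}[\dtv(\pi(s)\|\widehat{\pi}(s))] \;\le\; \tfrac{1}{2}\sqrt{\tfrac{|\Sset||\Aset|}{n}} + \tfrac{1}{2}\sqrt{\tfrac{|\Sset|}{n}} \;\le\; \sqrt{\tfrac{|\Sset||\Aset|}{n}},
\end{equation*}
using $|\Aset|\ge 1$ in the last step, which is the claim.

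I do not anticipate a genuine obstacle here; the only subtlety is the bookkeeping for states $s$ unobserved in $\Dset$, which must be handled consistently so that the second identity $\dtv(\widehat{x}\|x')=\dtv(\rho\|\widehat{\rho})$ holds without any exceptions. A naive conditioning argument on the visitation counts $n_s$ (conditionally, the action samples at $s$ are i.i.d.\ from $\pi(\cdot|s)$ so Lemma~\ref{lem:empirical-tv} applies per state) would also work but requires bounding $\E[\mathbf{1}[n_s\ge 1]/\sqrt{n_s}]$ plus an ``unvisited state'' contribution, which is considerably more involved than the triangle-inequality route above.
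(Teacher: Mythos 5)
Your proposal is correct and is essentially the paper's own proof: the paper also bounds $\E_{s\sim\rho}[\dtv(\pi(s)\|\widehat{\pi}(s))]$ by $\dtv(x\|\widehat{x}) + \dtv(\rho\|\widehat{\rho})$ via the same triangle inequality (written out termwise rather than through a named auxiliary distribution $x'$) and then applies Lemma~\ref{lem:empirical-tv} to each term. Your explicit handling of the $\widehat{\rho}(s)=0$ states is a minor tidiness improvement over the paper's write-up, which divides by $\widehat{\rho}(s)$ without comment.
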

\begin{proof}
Let $\widehat{x}$ be the empirical estimate of $x$ in $\Dset$. We have,
\begin{align}
    \E_{s\sim\rho}[\dtv(\pi(s)\|\widehat{\pi}(s))] &= \frac{1}{2}\sum_{s,a} \rho(s)\cdot |\pi(a|s) - \widehat{\pi}(a|s)| \\
    &=\frac{1}{2}\sum_{s,a} \rho(s)\cdot \left|\frac{x(s,a)}{\rho(s)} - \frac{\widehat{x}(s,a)}{\widehat{\rho}(s)}\right| \\
    &\le \frac{1}{2}\sum_{s,a} \rho(s)\cdot \left|\frac{\widehat{x}(s,a)}{\rho(s)} - \frac{\widehat{x}(s,a)}{\widehat{\rho}(s)}\right| + \frac{1}{2}\sum_{s,a} \rho(s)\cdot \left|\frac{\widehat{x}(s,a)}{\rho(s)} - \frac{x(s,a)}{\rho(s)}\right| \\
    &= \frac{1}{2}\sum_{s,a} \rho(s)\cdot \left|\frac{\widehat{x}(s,a)}{\rho(s)} - \frac{\widehat{x}(s,a)}{\widehat{\rho}(s)}\right| + \dtv(x\|\widehat{x}) \\
    &= \frac{1}{2}\sum_{s} \rho(s)\cdot\left|\frac{1}{\rho(s)}-\frac{1}{\widehat{\rho}(s)}\right| \left(\sum_{a} \widehat{x}(s,a)\right) + \dtv(x\|\widehat{x}) \\
    &= \frac{1}{2}\sum_{s} \rho(s)\cdot\left|\frac{1}{\rho(s)}-\frac{1}{\widehat{\rho}(s)}\right| \cdot\widehat{\rho}(s) + \dtv(x\|\widehat{x}) \\
    &=\dtv(\rho\|\widehat{\rho}) + \dtv(x\|\widehat{x}).
\end{align}
Finally, the bound in the lemma is achieved by application of Lemma~\ref{lem:empirical-tv} to each of the TV divergences.
\end{proof}

\subsection{Proof of Theorem~\ref{thm:sample}}
\label{sec:proof-sample}
To prove Theorem~\ref{thm:sample}, we first present the following variant of the bound in Theorem~\ref{thm:linear-bc}, which maintains the BC loss as a TV divergence rather than a KL divergence and whose validity is clear from~\eqref{eq:tv-diff}:
\begin{equation}
\Diff(\pirep,\pitarget) \leq
(1+\dchi(\visittarget\|\visitrb)^{\frac{1}{2}}) \cdot \epsilon_{\mathrm{R},\mathrm{T}} 
+ C\cdot\E_{z\sim \visittarget_\Zset}[\dtv(\pi_{*,\Zset}(z)\|\pirep(z))],
\end{equation}
where $C=\frac{2\Rmax}{(1-\gamma)^2}$.

The result in Theorem~\ref{thm:sample} is then derived by setting $\pirep\defeq\pi_{N,\Zset}$ and using the result of Lemma~\ref{lem:tv-bc}, noting that learning a tabular $\pirep$ with BC on $\Demos$ corresponds to setting $\pirep(a|z)$ to be the empirical conditional distribution appearing in $\Demos$ with respect to $\phi_M$.

\section{Experiment Details}\label{app:exp}

\subsection{Tree Environment Details}
The three-level binary decision tree used in~\Secref{sec:exp} has a $0.8$ chance of landing on the intended child node and a $0.2$ chance of random exploration following the Dirichlet distribution ($\alpha=1$). Each node in the tree has two rewards associated with taking left or right actions. The mean of the rewards are generated uniformly between $0$ and $1$ and are powered to the third and normalized to have a maximum of $1$ at each step. A unit Gaussian noise is then applied to the rewards. An optimal policy and a random policy achieve near $1$ and $0.5$ average per-step reward respectively.

\subsection{Experiment Details}
\paragraph{Tree} For tree experiments in~\Figref{fig:tabular}, we set $|S| / 8 = 10$, $M / 3 = 500, |Z| = 1024$ when ablating over $N$, $|S| / 8 = 10$, $N / 3 = 5, |Z| = 1024$ when ablating over $M$, $N / 3 = 5$, $M / 3 = 500$, $|Z| = 1024$ when ablating over $|S|$, and $N / 3 = 5$, $M / 3 = 500$, $|S| / 8 = 100$ when ablating over $|Z|$. For the representation dimensions, we use $16$ SVD features or $1024$ Fourier features. We use the Adam optimizer with learning rate  $0.01$ for both representation learning and behavioral cloning.

\paragraph{Atari} For Atari experiments in~\Figref{fig:deepmdp}, we set state embedding size to 256, Fourier features size to $8192$ and use all $50$M transitions as the offline data by default. When sampling batches, to encourage better negative samples in the contrastive learning, we sample 4 sequences of length 64 from the replay buffer, making a total batch size of 256 single-step transitions. 
We use the standard Atari CNN architecture with three convolutional layers interleaved with ReLU activation followed by two fully connected layers with $512$ units each to output state representations. For the energy-based parametrization in~\Secref{app:ablation}, we $\pirep$ is a single-hidden-layer NN with $512$ units. All networks are trained using the Adam optimizer with learning rate $0.0001$. We train the representations and behavior cloning concurrently with separate losses. Training is conducted on NVIDIA P100 GPUs.

\subsection{Improvements on individual Atari games}
\begin{figure}[h]
 \begin{center}
 \includegraphics[width=\textwidth]{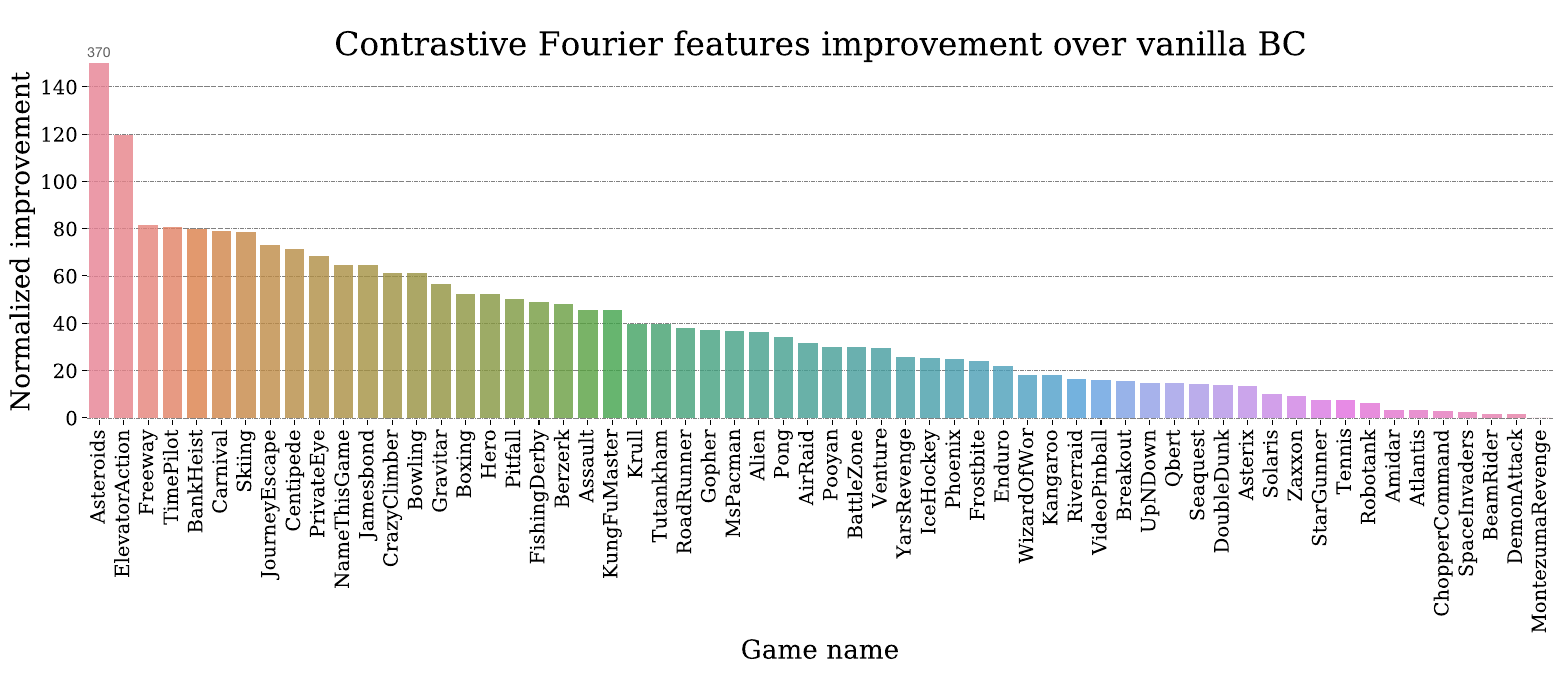} 
\end{center}
 \caption{Average performance improvements (over 5 seeds) of contrastive Fourier features over vanilla BC on individual Atari games.}
 \label{fig:ablation}
\end{figure}

\subsection{Ablation study of contrastive learning}\label{app:ablation}
We further investigate various factors that potentially affect the benefit of contrastive representation learning. We consider:
\begin{itemize}
    \item \textbf{Size of the state representations}: Either 256 or 512. 
    \item \textbf{Parameterization of the approximate dynamics model}: Either using Fourier features with a log-linear policy (corresponding to Section~\ref{sec:fourier} and Theorem~\ref{thm:linear-bc}) or using contrastive learning of an energy-based dynamics model with a more expressive single-hidden-layer neural network for $\pirep$ (corresponding to Section~\ref{sec:contrastive} and Theorem~\ref{thm:tabular-bc}).
    \item \textbf{How far the offline distribution differs from target demonstrations}: Using either the last 10M, 25M, or whole 50M transitions in the offline replay dataset.
\end{itemize}
\Figref{fig:ablation} shows the quantile of the normalized improvements among the $60$ Atari games. Overall, the benefit of representation learning is robust to these factors, and is slightly more pronounced with larger state embedding size and smaller difference between the offline distribution and target demonstrations.


\begin{figure}[t]
 \begin{center}
 \includegraphics[width=0.8\textwidth]{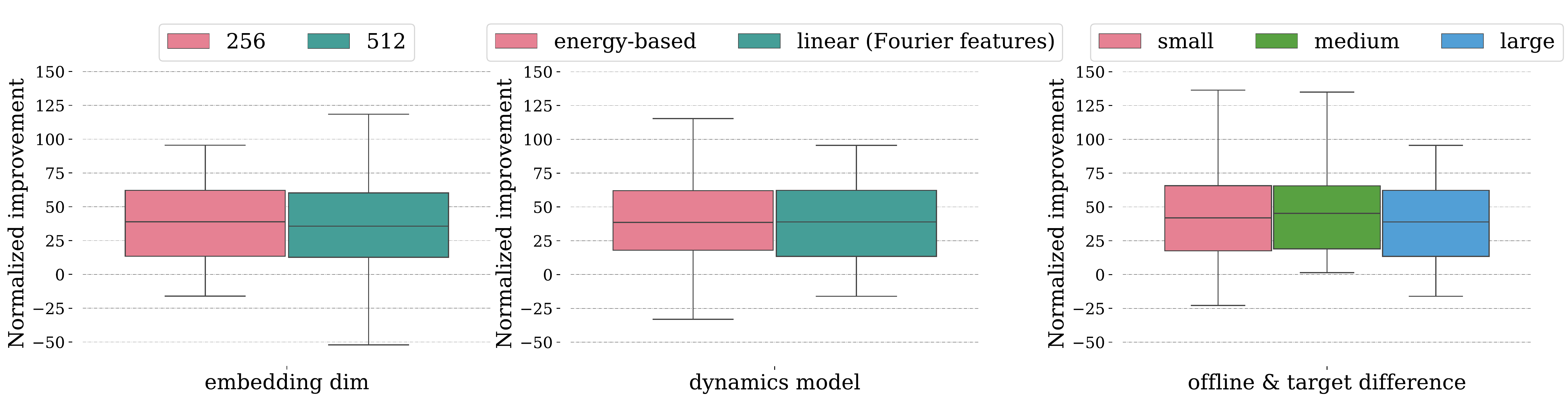} 
\end{center}
 \caption{Ablations over embedding dimension, parametrization of the approximate dynamics model, and difference between the offline distribution and target demonstrations. Each ablation changes one factor from the default (see Appendix~\ref{app:exp}), and shows the quantile (among $60$ Atari games) of the normalized improvements. Representation learning consistently shows significant gains.
 }
 \label{fig:ablation}
\end{figure}

\end{document}